\documentclass[sigconf,natbib=true,anonymous=false]{acmart}

\usepackage{algorithm}
\usepackage{algorithmic}
\usepackage{enumerate}
\usepackage{amsmath,amsthm}

\usepackage{url}

\newtheorem{definition}{Definition}%
\newtheorem{lemma}{Lemma}
\newtheorem{theorem}{Theorem}

\usepackage{graphicx}    
\usepackage{subfig}  
\DeclareSubrefFormat{myparens}{#1(#2)}
\captionsetup[subfloat]{subrefformat=myparens}
\usepackage{makecell}
\usepackage{siunitx}
\usepackage{array}
\usepackage{pifont}

\AtBeginDocument{%
  }

\begin{document}


\title{Data Assetization via Resources-decoupled Federated Learning}


\author{Jianzhe Zhao}
\affiliation{%
 \institution{Software college, Northeastern University}
 \city{Shenyang}
 \state{Liaoning}
 \country{China}}
 \email{zhaojz@swc.neu.edu.cn}

\author{Feida Zhu}
\affiliation{%
  \institution{School of Computing \& Information Systems, Singapore Management University}
  \city{Singapore}
  \country{Singapore}}
  \email{fdzhu@smu.edu.sg}

\author{Lingyan He}
\affiliation{%
  \institution{Software college, Northeastern University}
   \city{Shenyang}
 \state{Liaoning}
 \country{China}}
\email{2471425@stu.neu.edu.cn}

\author{Zixin Tang}
\affiliation{%
  \institution{Software college, Northeastern University}
   \city{Shenyang}
 \state{Liaoning}
 \country{China}}
\email{20227024@stu.neu.edu.cn}

\author{Mingce Gao}
\affiliation{%
  \institution{Software college, Northeastern University}
   \city{Shenyang}
 \state{Liaoning}
 \country{China}}
\email{20226811@stu.neu.edu.cn}

\author{Shiyu Yang}
\affiliation{%
  \institution{Software college, Northeastern University}
   \city{Shenyang}
 \state{Liaoning}
 \country{China}}
\email{20216732@stu.neu.edu.cn}

\author{Guibing Guo}
\affiliation{%
  \institution{Software college, Northeastern University}
   \city{Shenyang}
 \state{Liaoning}
 \country{China}}
\email{guogb@swc.neu.edu.cn}

\renewcommand{\shortauthors}{Trovato et al.}

\begin{abstract}
With the development of the digital economy, data is increasingly recognized as an essential resource for both work and life. However, due to privacy concerns, data owners tend to maximize the value of data through the circulation of information rather than direct data transfer. Federated learning (FL) provides an effective approach to collaborative training models while preserving privacy. However, as model parameters and training data grow, there are not only real differences in data resources between different data owners, but also mismatches between data and computing resources. These challenges lead to inadequate collaboration among data owners, compute centers, and model owners, reducing the global utility of the three parties and the effectiveness of data assetization. In this work, we first propose a framework for resource-decoupled FL involving three parties. Then, we design a Tripartite Stackelberg Model and theoretically analyze the Stackelberg-Nash equilibrium (SNE) for participants to optimize global utility. Next, we propose the Quality-aware Dynamic Resources-decoupled FL algorithm (QD-RDFL), in which we derive and solve the optimal strategies of all parties to achieve SNE using backward induction. We also design a dynamic optimization mechanism to improve the optimal strategy profile by evaluating the contribution of data quality from data owners to the global model during real training. Finally, our extensive experiments demonstrate that our method effectively encourages the linkage of the three parties involved, maximizing the global utility and value of data assets.

\end{abstract}


\begin{CCSXML}
<ccs2012>
<concept>
<concept_id>10002978</concept_id>
<concept_desc>Security and privacy</concept_desc>
<concept_significance>500</concept_significance>
</concept>
<concept>
<concept_id>10002951.10003227</concept_id>
<concept_desc>Information systems~Information systems applications</concept_desc>
<concept_significance>500</concept_significance>
</concept>
</ccs2012>
\end{CCSXML}

\ccsdesc[500]{Security and privacy}
\ccsdesc[500]{Information systems~Information systems applications}

\keywords{Data assetization, resource-decoupled federated learning, game theory, data valuation}

\maketitle

\section{Introduction}
With the rapid development of the digital economy, data have unlocked the potential of machine learning (ML), driving the growth of various industries and becoming a key factor of production in the new era \cite{bib-1}. Data assets have measurable value, can be quantified in monetary terms, and generate economic benefits for organizations, such as improving productivity and optimizing business processes \cite{bib-2}. However, transforming data into assets through market transactions is clumsy. Although there are many ingenious data pricing models \cite{bib-3,bib-4,bib-5}, direct data market transactions are a cumbersome way of data assetization. First, for privacy protection reasons, data owners (organizations or institutions) may not be willing to participate in data transactions and are subject to legal and regulatory restrictions, such as the General Data Protection Regulation (GDPR) \cite{bib-6}, which requires that data remain where it is generated. Secondly, data owners are more concerned about how the information (knowledge) embedded in the data can improve their business efficiency \cite{bib-7}. Therefore, they are more inclined to enable the circulation of information to maximize the value of the data rather than direct data transactions. In recent years, federated learning (FL) has emerged as a successful paradigm for collaborative model training under privacy constraints \cite{bib-8,bib-9}. It is widely adopted and benefits the finance, insurance, and healthcare industries \cite{bib-10,bib-41,bib-42,bib-43}. FL keeps data locally while training a global model collaboratively, thus offering an effective way for data assetization. 

However, to maximize the potential value of data assets by FL training high-quality global models, the inherent challenge is heterogeneous client data resources, i.e., data owners exhibit differences in data quantity and quality \cite{bib-11,bib-12,bib-13}. Since quantity and quality of data are key factors in determining the effectiveness of ML models, the data heterogeneity can cause high-quality data owners to lose the motivation to participate in FL. Moreover, a more pressing challenge is the mismatch between data and computing resources. With recent advances in artificial intelligence, the positive correlation between model performance and the number of model parameters has been well-established \cite{bib-14,bib-15}. Large language models have recently garnered significant attention in academia and industry because of their outstanding performance. However, as the number of parameters increases, data owners face the dilemma of insufficient computing resources. We conducted preliminary experiments for different computing resource configurations, model sizes (3B, 7B, and 13B), and training durations required with a fixed amount of data (140B tokens) \cite{bib-16}. The results in Tab. \ref{tab1} indicate that within acceptable training times, the lower limit of computing resources exceeds most data owners' typical computing capabilities. We refer to this real-world scenario as resource-decoupled FL, where data resources with unbalanced quality and quantity are distributed across different data owners and a mismatch between data and computing resources, as shown in Fig. \ref{fig1}.

\begin{figure}[tb]
  \centering
  \includegraphics[width=\linewidth]{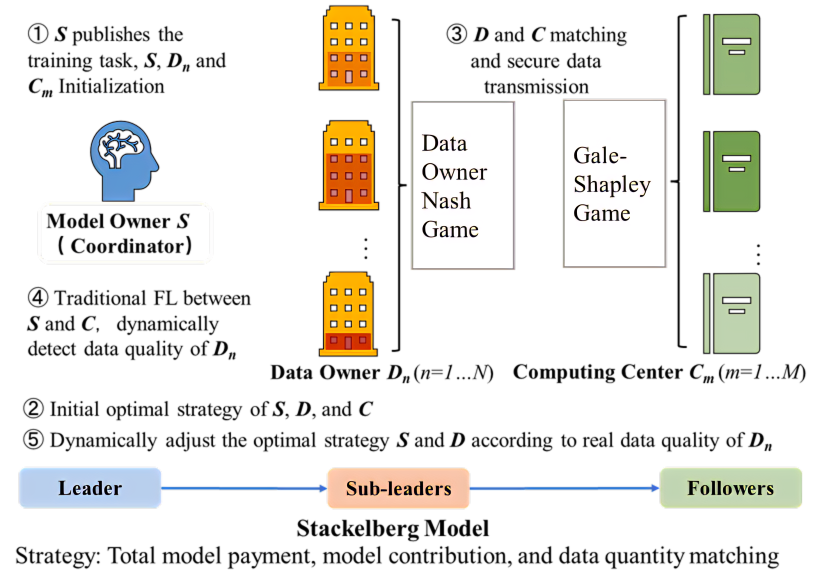}
  \vspace{-20pt}
  \caption{Data Assetaization via Resource-Decoupled FL}
  \Description{A scenario of resource-decoupled resources in FL, where data resources with unbalanced quality and quantity are distributed across different data owners and a mismatch between data resources and computing resources.}\label{fig1}
\end{figure}

\begin{table}[htbp]
\renewcommand{\arraystretch}{0.9} 
\centering
\caption{GPU Configurations, Model Size, and Training Performance}
\vspace{-10pt}
\begin{tabular}{lcc}
\toprule
\textbf{GPU * Quantity} & \textbf{\makecell{Model Size(B)}} & \textbf{\makecell{Training Time(days)}}\\ \midrule
H100 (80GB) * 1  & 7  & 133.28\\ 
H100 (80GB) * 1  & 13 & 274.51\\
H100 (80GB) * 1  & 30 & 571.18\\ 
H100 (80GB) * 2  & 65 &  618.78\\ \hline
A100 (40GB) * 1  & 7  & 646.30 \\
A100 (40GB) * 1  & 13 & 1200.27\\ 
A100 (40GB) * 1  & 30 & 2769.86\\
A100 (40GB) * 2  & 65 & 3000.69\\ \hline
V100S (32GB) * 1 & 7  & 1551.12\\
V100S (32GB) * 1 & 13 & 2880.66\\ 
V100S (32GB) * 2 & 30 & 3323.84\\
V100S (32GB) * 3 & 65 & 4801.10\\  
\bottomrule
\end{tabular}

\label{tab1}
\vspace{-5pt}
\end{table}

Focusing on the problem of heterogeneous data resources in FL, researchers have developed advanced incentive mechanisms that employ game theory, contracts, and auctions for interactions between the cloud server and multiple clients \cite{bib-17,bib-22,bib-23,bib-24, bib-25, bib-26}. Among them, game theory formulates the interaction between incentive structures and uses mathematical theories to solve individual optimization strategies in complex environments with competitive phenomena (such as FL). Stackelberg games provide solutions for non-cooperative games and are applied to FL scenarios with multiple roles \cite{bib-7,bib-11,bib-17}. There are also many dynamic game solutions for complex decision processes \cite{bib-27,bib-28}. Many researchers measure the contribution of client data and provide a fair solution for profit distribution, encouraging data owners to participate in FL and improving the performance of the global model. Some metrics are used to evaluate the contribution, such as verification accuracy and loss values \cite{bib-17, bib-18, bib-13}. Moreover, the Shapley values successfully quantify marginal contributions \cite{bib-19, bib-20, bib-17} and provide a fair incentive solution for cooperative games. However, the classical Shapley value calculation is computationally expensive and impractical in FL. Therefore, efficient approximations Shapley calculation have been proposed \cite{bib-21}. Although these game-theoretic incentive mechanisms have achieved results in server-client FL, they cannot be directly applied to resource-decoupled FL scenarios. Due to the separation of computing and data resources, the game should be complicated by introducing new roles. 

Unlike previous server-client models, we identify three roles in the resource-decoupled FL: model owner (coordinator), data owners, and computing centers. In Fig. \ref{fig1}, the model owner initiates FL training to build an optimal global model that maximizes the overall utility. Data owners hold data of varying quality and quantity. Their objective is to transform their data into advanced knowledge to create business value, such as models and monetary benefits, ultimately maximizing personal data assets. Computing centers are large-scale data centers with abundant computing resources that seek to increase their rental income. For a specific business scenario, the model owner coordinates $N$ data owners and $M$ computing centers in FL. In this work, we are dedicated to maximizing the value of the data assets. To incentivize tripartite collaboration, a reward scheme should be designed. This scheme encourages data owners to contribute high-quality and high-quantity data while computing centers align their computing resources with the data volume to achieve maximum revenue. Therefore, it maximizes the profit of personal data, the rental income of computing resources, and the utility of the global model. In light of the analyses above, when the utility of the global model reaches its maximum, all parties can achieve higher profits, ensuring optimal global utility. Therefore, the new challenges include: (C1) How can FL be effectively implemented in scenarios where data owners and computing centers are separated? (C2) How can an incentive mechanism be designed that encourages all parties to adopt optimal strategies, thus achieving global utility optimization? (C3) How do we assess the contribution of data owners to the final model and quantify their rewards?

Motivated by this, we focus on the approach of data assetization, that data do not circulate but the information does. We first propose a resource-decoupled FL framework for solving (C1), in which the model owner coordinates the FL training of the three parties, the data centers are in charge of providing the data, and the matching data centers are responsible for local training. Then, we design a Tripartite Stackelberg Model to encourage three parties to optimize their profit, thus solving (C2), in which the model owner is the leader, the data owners are sub-leaders, and the computing centers are followers. We derive three utility functions according to each one's profit and analyze the Nash equilibrium of the sub-game models. Global utility optimization is defined as the achievement of the Stackelberg-Nash equilibrium (SNE), and backward induction is introduced to solve the model, resulting in three-party strategies that maximize global utility. Finally, to overcome (C3), we evaluate the model contribution of data owners according to data quality, and an adjustment mechanism is designed to improve the optimal strategy, encouraging data owners to provide high-quality and quantity data in FL training to optimize the global model's performance.

Specifically, our contributions are as follows.
\begin{itemize}
 \item We propose a resource-decoupled FL framework that involves model owners, data owners, and computing centers and define utility functions and the global utility optimization problem to maximize the value of data assets.
\item We develop a Tripartite Stackelberg model and analyze the SNE of the model. Using backward induction, we derive optimal strategies to maximize global utility.
\item We introduce the Quality-aware dynamic resource-decoupled FL algorithm (QD-RDFL), in which a dynamic optimization mechanism is designed to improve optimal strategies by evaluating the contribution of data owners.
\item The comprehensive experiments demonstrate that our method effectively encourages participants, maximizing global utility and the value of data assets.
\end{itemize}


\section{Related Works}\label{sec_2}

\textbf{Data Valuation in ML: }
To encourage participants to contribute higher quality data and actively participate in FL, accurately assessing the contributions of participants is critical for an equitable distribution of benefits. Existing research primarily evaluates contributions based on the performance of the final model \cite{bib-22,bib-23,bib-24}. Some studies rely on previous data provided by clients or subjectively define data quality \cite{bib-11,bib-20,bib-25,bib-26}. In contrast, others assess contributions through actual performance metrics during training, such as validation accuracy, loss values, and Shapley values \cite{bib-17,bib-18,bib-19}. To objectively quantify the contributions of data owners in FL and design reward mechanisms that encourage the provision of high-quality data, we evaluate the data value based on actual training performance, considering data quantity and quality.

\textbf{Incentive Mechanisms in FL:}
Game-theoretical incentive mechanisms simulate strategic interactions between the server and the clients and the competitive dynamics among clients. As a result, these mechanisms have garnered extensive attention in FL. The classical Shapley value provides a robust framework to assess each participant's contribution, which can be leveraged to design incentive mechanisms for cooperative games \cite{bib-12,bib-21}. However, traditional Shapley values are computationally intensive, increasing the workload of FL training. Incentive mechanisms based on Stackelberg games position the server as the leader and clients as followers, typically maximizing the utility for both parties. For example, \cite{bib-17} designed a three-layer Stackelberg game for FL involving cloud-edge servers and clients, optimizing social utility through Pareto utility enhancements. \cite{bib-23,bib-25} used a two-stage Stackelberg game-based incentive mechanism, where the server first engages in a Stackelberg game to maximize its utility, followed by clients participating in a non-cooperative game to achieve their utility maximization through Nash equilibrium solutions. Additionally, \cite{bib-27} introduced a dynamic game solution for complex decision-making processes between servers and devices in the FL process as a continuous iterative game to optimize social welfare. \cite{bib-28} constructed a two-layer game architecture, where the upper layer mechanism dynamically adjusts selection strategies through reinforcement learning to achieve long-term utility maximization, and the lower layer mechanism benefits individuals through evolutionary games.

However, these incentive mechanisms are explicitly designed for client-server architectures and cannot be directly applied to the resource-decoupled FL scenario. In this context, it is essential to formulate and refine utility functions for the three parties involved, as well as to develop novel incentive mechanisms to achieve the goal of maximizing the value of data assets.

\section{Methodology}\label{sec_3}

\subsection{Problem Statement}

This work focuses on maximizing the value of data assets through FL in a resource-decoupled scenario. The ultimate profit of the three parties is linked to the quality of the global model. However, predicting the model's quality is challenging. Therefore, we define the problem as a global utility optimization in resource-decoupled FL.
In this section, we first formalize the global optimization problem and then provide the utility functions of three parties. The necessary notations are listed in Tab. \ref{tab2}.

\begin{table}[htb]
\centering
\caption{Notations and Descriptions}
\vspace{-10pt}
\begin{tabular}{c l}
\toprule
\textbf{Notations} & \makecell[l]{\textbf{Descriptions}}\\ \midrule
$S$, $D$, $C$  & \makecell[l]{Model owner (coordinator/server), \\ data owner, computing center}\\
$U_s$, $U_n$, $U_m$  & \makecell[l]{Utility of $S$, $D_n$, $C_m$}\\
$N$, $M$           & \makecell[l]{Number of $D$ and $C$} \\
$\lambda$          & \makecell[l]{Market regulating factor} \\
$d_m$ , $\left|d_m\right|$ & \makecell[l]{Data quantity and max quantity undertaken by $C_m$} \\
$\rho$             & \makecell[l]{Unit training payment of $D$} \\
$x_n$              & \makecell[l]{Data quantity provided by $D_n$} \\
$q_n$              & \makecell[l]{Model quality contribution of $D_n$} \\
$f_n$              & \makecell[l]{Data quality of training by $D_n$} \\
$\varepsilon$      & \makecell[l]{Unit training cost of $C$} \\
$\sigma_m$         & \makecell[l]{Computational power of $C_m$} \\
$\eta$             & \makecell[l]{Total model payment from the model owner} \\
$\left|X_n\right|$ & \makecell[l]{Maximum data volume of $D_n$} \\
$\xi$              & \makecell[l]{Data quality threshold of training} \\
$\alpha$           & \makecell[l]{Adjustment factor of model quality} \\
$G$           & \makecell[l]{Matching for data owner and computing center} \\
$f_m$           & \makecell[l]{Data quality of $C_m$ calculate during training} \\
$w$           & \makecell[l]{Model parameter} \\
$\beta$           & \makecell[l]{Learning rate for local model parameter update} \\
$loss(t)$          & \makecell[l]{Average loss at specific time $t$ in a iteration } \\
$L$          & \makecell[l]{Optimal adjustment round} \\
\bottomrule
\end{tabular}
\label{tab2}
\end{table}
\textbf{Global Utility Optimization:} We will design an incentive mechanism to derive the optimization strategy for the three parties to optimize global utility $U_{Global}$. The sub-optimization problems include the model owner's utility $U_s$, the utility $U_n$ of data owner $D_n$, and the utility $U_m$ of computing center $C_m$. As in the previous analysis, the overall utility $\sum_NU_n$ of data owners and the overall utility $\sum_MU_m$ of the computing centers increase when an optimum $U_s$ is obtained. Since $U_s$ is up to the quality of the global model, we need to evaluate the contribution of quantity and quality of data provided by data owners to global model updates. These evaluations encourage data owners to provide higher-quality model updates, thus improving the global model.
However, due to the separation of computing and data resources, FL tasks require data owners to be matched with computing centers. Consequently, computing centers maximize their utility by optimally matching with the data owners.

In resource-decoupled FL, the computing center acts as the 'client' in classical FL, such as FedAvg \cite{bib-9}, after matching with the specific data owner. Therefore, the resource-decoupled FL aims to train an optimal global model $w^*$, minimizing the global loss function $F(w)$ by aggregating the high-quality local update $F_i(w)$, as shown in Equ. \ref{equ-1}.
\begin{equation}
\label{equ-1}
\mathop{\min}_{w^*} F(w):=\sum_{i=1}^{K}p_i F_i(w).
\end{equation}
where $K$ represents the number of computing centers matching successfully. We assume $K\leq N \leq M$ in this work. $p_i=\frac{x_n}{\sum_Kx_n}$ is data provided ratio of data owner $D_n$.

To encourage the three parties, we analyze and formalize the utility of each computing center, data owner, and model owner as the basis of the reward distribution. We evaluate the model contribution of the data owner based on the data quality and quantity and transform the data value into model value. Meanwhile, considering both cost and profits, we derive the utility functions of the three parties.

\textbf{Utility of Computing Center:} The utility of computing center $U_m$ is the difference between the cost and profit gained from renting out computing power. Since the profit is related to the matching data quantity provided by the data owner, the larger the quantity, the higher the profit. However, the cost of completing the same task varies due to differences in hardware devices and computing power across computing centers\cite{bib-34,bib-35,bib-36}. The goal of $C_m$ is to determine the optimal matching with a specific data owner according to undertaking data quantity $d_m$ to maximize its utility $U_m$. Accordingly, we define $U_m$ of $C_m$ as Equ. \ref{equ-2}.
\begin{flalign}\label{equ-2}
&\
\begin{array}{l} 
  \left\{
  \begin{aligned}
    &\mathop{\arg\max}\limits_{d_{m}} \quad U_{m}(d_{m}) = \lambda \frac{d_{m}}{\sum_{m=1}^{M} d_{m}} \sum_{n=1}^{N} \rho x_{n} - \varepsilon \sigma_{m} d_{m}, \\[5pt]
    &s.t. \quad \rho \geq 0.
  \end{aligned}
  \right.
\end{array}&
\end{flalign}
where $\sigma_m$ is the computational power factor of $C_m$, which reflects the difference in cost across various computing centers. $\epsilon$ is the unit training cost of the computing center. $\rho$ is the unit training payment of the data owner. $\lambda$ is the market regulating factor to adjust the two-sided marketplace between data owners and computing centers. In this work, we assume $M=N$; at this point, $\lambda$ is 1.


\textbf{Utility of Data Owner:} The utility of data owner $U_n$ is the difference between the model profit generated by their data and the cost of model training. Specifically, the model profit of personal data originates from the total model payment provided by model owners, and the cost arises from the fee for renting a computing center per training unit. Following \cite{bib-13,bib-17}, we evaluate the contribution $q_n$ of each data owner to the global model based on the quantity $x_n$ and quality $f_n$ of their training data. Thus, given a certain total model payment $\eta$, the higher the contribution, the higher the reward data owners receive. The evaluations encourage data owners to provide higher quality and larger quantity of data in FL. The $U_n$ of $D_n$ is defined as shown in Equ. \ref{equ-3}.
\begin{flalign}\label{equ-3}
&\
\begin{array}{l} 
  \left\{
  \begin{aligned}
    &\mathop{\arg\max}\limits_{q_{n}} \quad U_{n}(q_{n},q_{-n}) = \frac{q_{n}}{\sum_{n=1}^{N}q_{n}} \eta - \lambda \rho x_{n}, \\[5pt]
    &s.t. \quad 
    \left\{
    \begin{aligned}
        &q_{n} = f_{n} x_{n}, \\
        &0 < x_{n} < \left| X_{n} \right|, \\
        &f_{n} \geq \xi. \\ 
    \end{aligned}
    \right.
  \end{aligned}
  \right.    
\end{array} &
\end{flalign}
where $| \cdot |$ denotes the number of elements in a set, and $q_n$ represents the model quality contribution of $D_n$. Specifically, $q_n$ is determined by the product of data quality and data quantity. $q_{-n}=\{q_1, q_2, ..., q_{n-1}, q_{n+1},....,q_N\}$  denotes the collective model quality contribution of overall data owners except $D_n$.

\textbf{Utility of Model Owner:} The utility of the model owner $U_s$ is the difference between the global model profit and the total model payment $\eta$ to data owners. As known, the higher the model contributions from the data owners, the higher quality of the global model, i.e., global model profit. Therefore, we define the quality of the global model as a concave function $\alpha\cdot g(\sum^{N}_{n=1}q_n)$ which depends on the model contributions of all data owners, where $\alpha > 0$ is a preset parameter. The $U_s$ of model owner $S$ is defined as Equ. \ref{equ-4}.
\begin{flalign}\label{equ-4}
&\
\begin{array}{l} 
  \left\{\begin{matrix} 
  \mathop{\arg\max}\limits_{\eta}\quad U_{s}(\eta) \quad n\in \left [1,N \right ],  \\
  s.t.\quad  \left\{\begin{array}{lc}
    U_{s}(\eta)=\alpha \cdot g(\sum_{n=1}^{N}q_{n})-\eta, \\
    \eta\ge 0.
  \end{array}\right.
\end{matrix}\right.    
\end{array} &
\end{flalign}


\subsection{ Tripartite Stackelberg Model}
In this section, we specify the strategies of $S$, $D$, and $C$, each trying to maximize its profit. We model the Stackelberg game for three parties and define the SNE to achieve global utility optimization.

The strategy profile denoted as $\left\langle \eta^*, Q^*, G^* \right\rangle$, consists of the strategies of the model owner $S$, the data owners $D_n ( n = 1, \ldots, N )$, and the computing centers $C_m  ( m = 1, \ldots, M )$. These strategies are taken in order: first, the model owner $S$ determines the optimal total model payment $\eta^*$; then, each data owner $D_n$ decides the quantity and quality of data to provide, i.e., the model quality contribution $q_n^*$; finally, each computing center $C_m$ determines the beneficial matching relationship $G^*_m$ with the specific data owner based on the optimal quantity of data undertaking $d^*_m$. To protect data privacy, we perform a one-time matching between data owners and the computing centers. In this manner, we formalize the resource-decoupled FL strategy as a Tripartite Stackelberg Model in Def. \ref{def-1}, where the model owner acts as the leader, the data owners act as secondary leaders, and the computing centers act as followers. The Stackelberg model establishes the sequential achievement of sub-utility optimization goals based on the major-minor relationships between the three parties. Within this framework, $N$ data owners and $M$ computing centers adopt strategies in their inner Nash game, ultimately achieving global utility optimization.

\begin{definition}[Tripartite Stackelberg Model]\label{def-1}
The game comprises the strategies of the model owner $S $, the data owners $D_n( n = 1, \ldots, N )$, and the computing centers $C_m( m = 1, \ldots, M )$ to achieve global utility optimization in resource-decoupled FL.
\begin{equation}\label{equ-5}
\left\{ \begin{array}{ll}
    \text{Leader } S: \quad \eta^* = \arg\max\limits_{\eta} U_s, \\[5pt]
    \text{Sub-Leaders } D_n: \quad q_n^* = \arg\max\limits_{q_n} U_n, & n = 1, \dots, N, \\[5pt]
    \text{Followers } C_m: \quad G^*_m = \arg\max\limits_{G_m} U_m, & m = 1, \dots, M.
\end{array} \right.
\end{equation}
\end{definition}

We aim to generate the strategy profile $\left\langle \eta^*, Q^*, G^* \right\rangle$ of the tripartite Stackelberg model defined by Equ. \ref{equ-5} to achieve the SNE.
Specifically, for the sub-game between data owners, we derive its corresponding Nash equilibrium \textbf{(Appendix \ref{App-1})}, and the decisions of the three parties collectively achieve the SNE. We define SNE in the resource-decoupled FL, as shown in Equ. \ref{equ-6}.

\begin{definition}[ Stackelberg-Nash equilibrium]\label{def-2}
An optimal strategy profile <$\eta^*, Q^*, G^*$> constitutes a Stackelberg-Nash equilibrium if and only if the following set of inequalities is satisfied.
\end{definition}

\begin{equation}\label{equ-6}
\left\{ \begin{array}{ll}
U_s(\eta^*,Q^*, G^* ) \geq U_s(\eta, Q^*, G^*), \\[8pt]
U_n(\eta^*, q_n^*, q_{-n}^*, G^*) \geq U_n(\eta^*, q_n, q_{-n}^*, G^*), & n = 1, \ldots, N, \\[8pt]
U_m(\eta^*, Q^*, G_m^*) \geq U_m(\eta^*, Q^*, G_m), & m = 1,\ldots, M.
\end{array} \right.
\end{equation}
where $ Q^*=\{q^*_1, q^*_2, ..., q^*_N\}$ is the Nash equilibrium solutions of sub-games. SNE indicates that each of the three parties takes its optimal strategy, which maximizes its utility. No one can add its own utility by unilaterally changing its strategy.

\subsection{Analysis of Tripartite Stackelberg Model}\label{sec-3.3}
In this section, we prove the existence and uniqueness of the SNE strategy and explore the methods to identify it. The strategies of the followers can be described as effectively responding to the optimal strategy provided by the leader. Based on backward induction \cite{bib-29}, we first derive the optimal $d_m^*$ for each computing center, followed by the optimal $q_n^*$ for each data owner, and finally the optimal $\eta^*$ for the model owner. The optimal strategy combinations of the three parties that achieve SNE. 

\begin{lemma}\label{lemma-1}
For computing center $C_m$, there is a unique optimal $d_m^*$ to maximize its utility $U_m$ for the given data quantity from the data owners.
\end{lemma}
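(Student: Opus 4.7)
The plan is to show strict concavity of $U_m$ as a function of $d_m$ with the strategies of the other computing centers fixed, and then to identify the unique stationary point as the global maximizer. This reduces the lemma to a one-variable calculus argument on a concave function.

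First I would abbreviate $D_{-m} = \sum_{k\neq m} d_k$ and $A = \lambda \sum_{n=1}^{N} \rho x_n$, so that Equ. \ref{equ-2} becomes $U_m(d_m) = A\, d_m/(d_m+D_{-m}) - \varepsilon \sigma_m d_m$. Under the stated constraints ($\rho\geq 0$, $x_n>0$, and at least one other $d_k>0$), the coefficient $A$ and the quantity $D_{-m}$ are strictly positive. A direct differentiation gives
\begin{equation*}
\frac{\partial U_m}{\partial d_m} = \frac{A\, D_{-m}}{(d_m+D_{-m})^2} - \varepsilon\sigma_m, \qquad \frac{\partial^2 U_m}{\partial d_m^2} = -\frac{2 A\, D_{-m}}{(d_m+D_{-m})^3} < 0,
\end{equation*}
so $U_m$ is strictly concave on $[0,\,|d_m|]$. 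Setting the first derivative to zero and solving yields the closed-form candidate $d_m^{*} = \sqrt{A\,D_{-m}/(\varepsilon\sigma_m)} - D_{-m}$, and strict concavity guarantees it is the unique maximizer whenever it is interior to the feasible region.

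The main obstacle I expect is handling feasibility at the boundary: the formula for $d_m^{*}$ is only an interior optimum when $A \geq \varepsilon\sigma_m D_{-m}$ and $d_m^{*}\leq |d_m|$. Outside this regime, I would argue from the sign of $\partial U_m/\partial d_m$ that $U_m$ is monotonic on $[0,|d_m|]$, so the maximizer is the appropriate endpoint and is again unique. A secondary point worth flagging, though not strictly part of the lemma, is that $D_{-m}$ is treated as a parameter: the lemma establishes only the individual best response of $C_m$, and the joint fixed-point of all best responses is handled separately in the SNE analysis. With these pieces in place the uniqueness and existence of $d_m^{*}$ follow immediately from strict concavity.
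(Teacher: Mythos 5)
Your proposal is correct and follows essentially the same route as the paper: compute $\partial U_m/\partial d_m$ and $\partial^2 U_m/\partial d_m^2$, observe the second derivative is strictly negative so $U_m$ is strictly concave on the bounded, closed interval $[0,|d_m|]$, and conclude a unique maximizer. Your treatment is in fact somewhat more careful than the paper's, since you explicitly derive the interior best response $d_m^{*}=\sqrt{A\,D_{-m}/(\varepsilon\sigma_m)}-D_{-m}$ and handle the boundary cases where the stationary point falls outside the feasible interval, whereas the paper stops at concavity and then states only the joint equilibrium formula.
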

\begin{proof}
The policy space of the computing center is the quantity of data undertaken. This interval is $[0, |d_m|]$, which is bounded, closed, and continuous. Based on the utility function $U_m$, we obtain the first and second derivatives of $U_m$.

\begin{equation}\label{equ-30}
\frac{\partial U_n(d_m)}{\partial d_m}=\frac{\sum_{i\neq m} d_i}{(d_m+\sum_{i\neq m} d_i)^2}\lambda \sum_{n=1}^{N} \rho x_{n} - \varepsilon \sigma_{m}
\end{equation}
and
\begin{equation}\label{equ-31}
\frac{\partial ^2 U_n(d_m)}{\partial ^2 d_m}=\frac{-2 \sum_{i\neq m} d_i}{(d_m+\sum_{i\neq m} d_i)^3}\lambda \sum_{n=1}^{N} \rho x_{n} < 0
\end{equation}
\end{proof}
For every computing center there exists a $d_m^*$ such that $U_m$ is optimal, we can also prove that there exists a unique optimal Nash equilibrium strategy, and the proof is similar to Lemma \ref{lemma-2}. Accordingly, the derived $d_m^*$ is given in Equ. \ref{equ-32}.
\begin{equation}\label{equ-32}
d_m^*=\lambda\frac{(M-1)\sum_{n=1}^{N} \rho x_{n}}{\varepsilon\sum_{i=1}^M{\sigma_{i}}}(1-\sigma_m\frac{M-1}{\sum_{i=1}^M{\sigma_{i}}})
\end{equation}

\begin{lemma}\label{lemma-2}
For data owner $q_n$, there is a unique Nash equilibrium optimal strategy $q_n^*$ to maximize its $U_n$ for the given $\eta$ and $q_{-n}$.
\end{lemma}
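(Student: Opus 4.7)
The plan is to reduce the utility of $D_n$ to a function of the single scalar $q_n$, establish strict concavity on a compact strategy interval, invoke a standard existence theorem for the Nash equilibrium of the sub-game, and then verify uniqueness by solving the system of first-order conditions in closed form.

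First, I would use the constraint $q_n = f_n x_n$ to eliminate $x_n$, rewriting
\begin{equation*}
U_n(q_n, q_{-n}) \;=\; \frac{q_n}{q_n + \sum_{i\neq n} q_i}\,\eta \;-\; \frac{\lambda\rho}{f_n}\,q_n,
\end{equation*}
on the strategy interval $q_n \in (0,\, f_n |X_n|]$, which (after taking the closure at $0$, where $U_n=0$ is well-defined by continuity) is non-empty, compact and convex. Holding $q_{-n}$ fixed with $\sum_{i\neq n} q_i > 0$, I would compute
\begin{equation*}
\frac{\partial U_n}{\partial q_n} \;=\; \frac{\sum_{i\neq n} q_i}{\bigl(\sum_i q_i\bigr)^{2}}\,\eta \;-\; \frac{\lambda\rho}{f_n}, \qquad \frac{\partial^{2} U_n}{\partial q_n^{2}} \;=\; -\,\frac{2\sum_{i\neq n} q_i}{\bigl(\sum_i q_i\bigr)^{3}}\,\eta \;<\; 0.
\end{equation*}
Strict concavity in $q_n$ together with continuity in $q_{-n}$ implies, by the Debreu--Glicksberg--Fan theorem, that the sub-game among the data owners admits at least one pure-strategy Nash equilibrium, and that each player has a unique best response to any profile of opponents' strategies.

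For uniqueness of the equilibrium itself, I would solve the system of first-order conditions. Setting $Q := \sum_{i=1}^{N} q_i$, the FOC for each $D_n$ yields $q_n = Q - \lambda\rho\, Q^{2}/(f_n \eta)$. Summing over $n = 1,\dots,N$ gives $(N-1)\,Q = \frac{\lambda\rho\, Q^{2}}{\eta}\sum_{n=1}^{N} \frac{1}{f_n}$, which has the unique positive solution
\begin{equation*}
Q^{*} \;=\; \frac{(N-1)\,\eta}{\lambda\rho\,\sum_{n=1}^{N} \tfrac{1}{f_n}},
\end{equation*}
and back-substitution yields a unique $q_n^{*}$ for every $n$. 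I would then check that $q_n^{*} > 0$ under the regularity assumption $f_n \geq \xi$ (so that $1/f_n$ is bounded) and that $q_n^{*} \leq f_n |X_n|$ under mild conditions on $\eta$, placing the candidate in the interior of the feasible region so the FOC characterization is tight.

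The main obstacle I anticipate is the uniqueness step rather than existence: strict concavity of $U_n$ in $q_n$ alone guarantees a unique best response but not a unique equilibrium. The cleanest way around this is the explicit closed-form solution above, which shows that any equilibrium must have the same aggregate $Q^{*}$ and hence the same individual $q_n^{*}$. A secondary, minor difficulty is the boundary of the strategy set: I would argue that any equilibrium with $q_n = 0$ for all $n$ is ruled out by the interior FOC (since a deviation to a small positive $q_n$ strictly increases $U_n$ when $\eta > 0$), so the equilibrium indeed lies in the interior where the calculus argument applies.
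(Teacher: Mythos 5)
Your proposal is correct and follows essentially the same route as the paper: establish strict concavity of $U_n$ in $q_n$ via the second derivative on a compact convex strategy set, then sum the first-order conditions over all $N$ data owners to pin down the aggregate $\sum_i q_i$ and back-substitute for the unique closed-form $q_n^*$ (matching the paper's Equ.~\ref{equ-11} and Appendix~\ref{App-1}). Your treatment is in fact somewhat more careful than the paper's, since you explicitly separate uniqueness of the best response from uniqueness of the equilibrium and handle the boundary cases, whereas the paper glosses over this by appealing to quasi-concavity alone.
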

According to \cite{bib-7}, a unique optimal Nash equilibrium solution exists if and only if the following two conditions are satisfied: (1) the set of strategies is convex, bounded, and closed; (2) the utility function in the strategy space is quasi-concave and continuous.

\begin{proof}
The policy space for the data owner is to decide how much model quality to contribute. This interval is $[0, |Xn|]$, which is bounded, closed, and continuous. According to the utility function $U_n$ of the data owner, we obtain the first and second-order derivatives of $U_n$ as shown in Equ. \ref{equ-8} and Equ. \ref{equ-9}.

\begin{equation}\label{equ-8}
\frac{\partial U_n(q_n,q_{-n})}{\partial q_n}=\frac{\sum_{i\neq n} q_i}{(q_n+\sum_{i\neq n} q_i)^2}\eta - \frac{\lambda\rho}{f_n}
\end{equation}
and
\begin{equation}\label{equ-9}
\frac{\partial ^2 U_n(q_n,q_{-n})}{\partial ^2 q_n}=\frac{-2 \sum_{i\neq n} q_i}{(q_n+\sum_{i\neq n} q_i)^3}\eta < 0
\end{equation}

The strategy of the data owner $ Q^*=\{q^*_1, q^*_2, ..., q^*_N\}$ that we need to solve is the Nash equilibrium solution, so we derive and proof of Nash equilibrium is detailed in \textbf{Appendix \ref{App-1}}.
\end{proof}

Setting Equ.\ref{equ-9} equal to zero, we can deduce the optimal response strategy for a single data owner as Equ. \ref{equ-11}.

\begin{equation}\label{equ-11}
q_n^*=\frac{(N-1)\eta}{\lambda\rho\sum_{i=1}^N{\frac{1}{f_i}}}(1-\frac{N-1}{f_n\sum_{i=1}^N{\frac{1}{f_i}}})
\end{equation}

\begin{lemma}\label{lemma-3}
For the model owner, there is a unique optimal $\eta^*$ to optimize the utility $U_s$ for given the parameter of $\alpha$ and the objective data quality provided by the data owner $f_n, n\in (0, N]$.
\end{lemma}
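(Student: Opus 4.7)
The plan is to exploit the sequential structure of the Stackelberg game: since the data owners' best responses $q_n^*$ from Lemma \ref{lemma-2} (Equ.\ \ref{equ-11}) are already fixed functions of $\eta$, I would substitute them back into $U_s$ and then reduce the problem to a one-dimensional optimization in $\eta$. Concretely, I would first observe that $q_n^*(\eta) = A_n \eta$ where
\[
A_n = \frac{N-1}{\lambda\rho\sum_{i=1}^{N}\frac{1}{f_i}}\Bigl(1 - \frac{N-1}{f_n \sum_{i=1}^{N}\frac{1}{f_i}}\Bigr),
\]
so that $\sum_{n=1}^{N} q_n^*(\eta) = B\eta$ with $B := \sum_{n=1}^{N} A_n$. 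I would verify that $B>0$ using the same positivity argument that made the per-owner equilibrium well-defined (each term in the sum is nonnegative under $f_n \geq \xi$ together with the assumption that $f_n \sum_i 1/f_i \geq N-1$, which follows because $f_n/f_i \geq 1$ for comparable qualities). This reduces the leader's problem to $U_s(\eta) = \alpha \cdot g(B\eta) - \eta$ on the closed half-line $\eta \geq 0$.

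Next I would establish existence and uniqueness by checking the standard conditions on this reduced objective. The strategy space $[0,\infty)$ is convex and closed; for continuity and coercivity I would argue that $g$, being concave and (implicitly) non-decreasing with diminishing returns, makes $U_s(\eta) \to -\infty$ as $\eta \to \infty$, so the maximum is attained on a bounded sub-interval. Differentiating twice gives
\[
\frac{\partial U_s}{\partial \eta} = \alpha B\, g'(B\eta) - 1,\qquad \frac{\partial^2 U_s}{\partial \eta^2} = \alpha B^{2}\, g''(B\eta) \leq 0,
\]
so $U_s$ is concave in $\eta$; strict concavity of $g$ upgrades this to strict concavity of $U_s$, which yields uniqueness. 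Solving the first-order condition $g'(B\eta^*) = 1/(\alpha B)$ then produces the closed-form $\eta^*$, and the Stackelberg-Nash equilibrium of Definition \ref{def-2} is assembled by plugging $\eta^*$ back into Equ.\ \ref{equ-11} and then into Equ.\ \ref{equ-32}.

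The main obstacle is the composition step: unlike Lemmas \ref{lemma-1} and \ref{lemma-2}, where concavity of each utility is immediate from its explicit algebraic form, here concavity of $U_s$ in $\eta$ must be inherited through the linear map $\eta \mapsto B\eta$ from the concavity of $g$. This requires confirming that the coefficient $B$ is strictly positive, which in turn rests on the quality constraint $f_n \geq \xi$ and the implicit assumption that no single data owner's quality dwarfs the aggregate $\sum_i 1/f_i$ term. I would flag this positivity check as the technical crux and, if necessary, add a hypothesis (analogous to the "$N \geq 2$ with comparable qualities" regime used to justify Equ.\ \ref{equ-11}) to ensure $B>0$ and hence a well-defined, unique $\eta^*$.
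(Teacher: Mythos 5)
Your proposal is correct and follows essentially the same route as the paper: substitute the data owners' linear best responses $q_n^*(\eta)=T_n\,\eta$ into $U_s$, use concavity of $g$ to obtain strict concavity of the reduced one-dimensional objective, and solve the first-order condition (with $g(x)=\ln(1+x)$ this gives $\eta^*=\alpha-1/\sum_{n=1}^{N}T_n$, i.e.\ Equ.~\ref{eq:formula 14}). The positivity of $B=\sum_{n=1}^{N}T_n$ that you flag as the technical crux is in fact automatic, since the sum telescopes to $(N-1)/(\lambda\rho\sum_{i=1}^{N}1/f_i)>0$ for $N\geq 2$, even though individual terms $T_n$ may be negative.
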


\begin{proof}

The range of $\eta$ is $[0, \infty]$ for each model owner. Thus, this set of strategies is clearly convex and continuous. Although the value of $\eta$ can be infinite in theory, practically, the model owner will not give an infinite total model payment, and there is also the effect of diminishing marginal utility. This leads to the fact that $\eta$ is necessarily bounded and closed. We obtain the first-order and second-order derivatives as shown in Equ. \ref{eq:formula 12} and Equ. \ref{eq:formula 13}.

\begin{equation}\label{eq:formula 12}
\frac{\partial U_s(\eta)}{\partial \eta}=\alpha g^{'}(\sum_{n=1}^Nq_n)\sum_{i=1}^NT_n-1
\end{equation}
and
\begin{equation}\label{eq:formula 13}
\frac{\partial^2 U_s(\eta)}{\partial^2 \eta}=\alpha g^{''}(\sum_{n=1}^Nq_n)(\sum_{i=1}^NT_n)^2 < 0
\end{equation}
where $T_n=\frac{(N-1)}{\lambda\rho\sum_{i=1}^N{\frac{1}{f_i}}}(1-\frac{N-1}{f_n\sum_{i=1}^N{\frac{1}{f_i}}})$. Since $g(\cdot)$ is a convex function and to simulate the diminishing marginal benefit, we set $g(x)=ln(1+x)$ in this work. 
\end{proof}

By setting the first derivative equation to zero, we can compute the optimal model total payment for the model owner as follows.
\begin{equation}\label{eq:formula 14}
\eta ^{*}=\alpha-\frac{1}{\sum_{i=1}^NT_n}
\end{equation}

\begin{theorem}
The complete optimal strategy profile $\left\langle \eta^*, Q^*, G^* \right\rangle$ determined by the backward induction approach uniquely constitutes the Stackelberg-Nash equilibrium.
\end{theorem}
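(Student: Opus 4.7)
The plan is to verify the three SNE inequalities in Equation~\ref{equ-6} by invoking Lemmas~\ref{lemma-1}--\ref{lemma-3} in reverse order of the Stackelberg hierarchy, which is exactly the backward induction referred to in Section~\ref{sec-3.3}. First, at the bottom layer, I would fix any data quantity profile $(x_1,\dots,x_N)$ committed to by the data owners and apply Lemma~\ref{lemma-1}: the strict concavity of $U_m$ in $d_m$ (Equation~\ref{equ-31}) together with the bounded, closed, convex strategy interval $[0,|d_m|]$ yields a unique best response $d_m^*$ given by Equation~\ref{equ-32}. Since the matching is one-to-one by construction, this uniquely pins down $G^*$ and immediately gives $U_m(\eta^*,Q^*,G_m^*)\ge U_m(\eta^*,Q^*,G_m)$ for every $m$.

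Next, at the middle layer, I would fix the leader's action $\eta$ and invoke Lemma~\ref{lemma-2}: the strategy set $[0,|X_n|]$ is convex and compact, and Equation~\ref{equ-9} shows that $U_n$ is strictly concave in $q_n$. These, combined with the sub-game analysis deferred to Appendix~\ref{App-1}, yield a unique Nash equilibrium $Q^*=(q_1^*,\dots,q_N^*)$ with components given by Equation~\ref{equ-11}, thereby establishing the second family of inequalities in Equation~\ref{equ-6}. Finally, at the top layer, I would substitute the equilibrium response $Q^*(\eta)$ into $U_s(\eta)$ and apply Lemma~\ref{lemma-3}: strict concavity of $U_s$ in $\eta$ (Equation~\ref{eq:formula 13}) gives the unique optimizer $\eta^*$ of Equation~\ref{eq:formula 14} and hence the leader's inequality.

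Uniqueness of the complete profile $\langle \eta^*,Q^*,G^*\rangle$ would then follow by chaining the three individual uniqueness statements along the induction: any alternative SNE would have to differ at some layer, contradicting the corresponding lemma at that layer. The main obstacle is the middle-layer claim. Strict concavity of $U_n$ only in its own argument $q_n$ guarantees uniqueness of each player's best response, not uniqueness of the simultaneous-move equilibrium of the $N$-player sub-game; one typically needs an additional ingredient, such as Rosen's diagonal strict concavity condition or a contraction argument on the joint best-response map, which is presumably the content of Appendix~\ref{App-1}. A secondary subtlety is checking that the interior critical point in Equation~\ref{equ-11} actually satisfies the feasibility constraints $0<x_n<|X_n|$ and $f_n\ge\xi$, so that the argument is not corrupted by corner solutions; if necessary, a projection onto the feasible interval would have to be handled explicitly without losing uniqueness.
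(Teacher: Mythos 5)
Your overall architecture matches the paper's: the proof in Appendix~\ref{App-2} is exactly a backward-induction chaining of Lemmas~\ref{lemma-1}--\ref{lemma-3}, using concavity of each utility on a convex, compact strategy set to get a unique optimizer at each layer, with the middle-layer uniqueness resolved by the explicit solution of the system of first-order conditions in Appendix~\ref{App-1} (Equations~\ref{eq:formula 16}--\ref{eq:formula 20}), which is precisely the ``additional ingredient'' you correctly suspected was needed beyond own-argument concavity. Your caveat about corner solutions is also a fair observation that the paper does not address.

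The genuine gap is at the bottom layer. You claim that the unique interior maximizer $d_m^*$ of Equation~\ref{equ-32} ``uniquely pins down $G^*$ and immediately gives $U_m(\eta^*,Q^*,G_m^*)\ge U_m(\eta^*,Q^*,G_m)$.'' It does not: in Definition~\ref{def-1} the follower's strategy variable is the \emph{matching} $G_m$, not the continuous quantity $d_m$, and because of the privacy constraint the data of owner $D_n$ is never split, so a computing center can only undertake whatever $x_n^*$ its matched owner actually holds --- which in general differs from $d_m^*$ (the paper's Tables~\ref{tab4}--\ref{tab6} show exactly this mismatch). Consequently the third inequality of Equation~\ref{equ-6} cannot be established as an exact best-response condition over $d_m\in[0,|d_m|]$; the paper instead uses $d_m^*$ only as a \emph{preference} fed into the Gale--Shapley algorithm and appeals to the stability of the resulting one-to-one matching, so the third inequality holds only in the \emph{relaxed} Nash-equilibrium sense guaranteed by stable matching. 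Your argument, as written, proves optimality over a strategy space the followers do not actually have, and skips the step that connects the continuous optimum $d_m^*$ to the discrete equilibrium object $G^*$.
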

\begin{proof}
See \textbf{Appendix \ref{App-2}}.   
\end{proof}

\subsection{QD-RDFL}

In this section, we design the QD-RDFL algorithm to identify the optimal strategy profile, in which a dynamic optimization mechanism is designed to improve the optimal strategy by evaluating the model contribution of data owners. 

Using the backward induction method\cite{bib-39,bib-40}, we first calculate the optimal strategy of the model owner $\eta^*$ according to Equ. \ref{eq:formula 14}. We then calculate the optimal strategy $Q^*_n$ for $D_n$ according to Equ.\ref{equ-9}. Since $Q_n$ is evaluated along two dimensions, data quality is a posterior metric the server assesses after training, and data quantity $X^*_n$ is a decision determined immediately. Therefore, we initialize $D_n$ with an initial $f_n$ for computation and dynamically adjust $f_n$ based on the real training results. Finally, we calculate the optimal $d^*_m$ according to Equ. \ref{equ-32}, and construct the optimal match $G^*$ using the Gale-Shapley algorithm \cite{bib-30,bib-31}. The matching remains unchanged during subsequent computations for private protection, which means adjusting $f_n$ will not affect the matching relationship.

To evaluate the real contribution of data quality, the most accurate method is to evaluate the test accuracy of each local model. However, it incurs additional costs and resource consumption. In contrast, accessing training loss does not require extra overhead, as it is generated during training. Therefore, we evaluate $f_n$ using the training loss during each iteration as shown in Equ. \ref{eq:formula7}.

\begin{equation}\label{eq:formula7}
\text{$f = loss(t_s)- loss(t_e)$}.
\end{equation}
where $\text{loss}(t)$ denotes the loss in a specific round $t$, $t_s$ represents the start time of validation, and $t_e$ represents the end time. Clients must submit their local model updates within the specified time range; otherwise, the edge server will reject the submission.

\begin{algorithm}
\caption{QD-RDFL}
\begin{algorithmic}[1]
\REQUIRE $D_n$ ($n = 1, \ldots, N$), and $C_m$ ($m = 1, \ldots, M$), $\rho$, $\varepsilon$, $\alpha$, $\lambda$, and adjusting round $L$
\ENSURE the optimal strategy profile <$\eta^*, Q^*, G^*$> for model owner, data owners, and computing centers
\STATE Initialization of $S$, $D$, $C$, report initial $f_n$
\STATE Calculate $\eta^{*}$ // according to Equ. \ref{eq:formula 14}
\STATE Calculate $x_n^*$ for $D_n, n\in N$ // according to Equ. \ref{equ-11}
\STATE Calculate $d_m^*$ for $C_m, m\in M$ // according to Equ. \ref{equ-32}
\STATE $G^* = Gale-Shapley(D_n, C_m)$ // call Gale-Shapley 
\FOR{each $D_n, n \in [1, N]$}
    \STATE Randomly pick $x_n^*$ and secure data transfer according to $G^*$
\ENDFOR
\FOR{each round $t=1,2,..., T$}
      \FOR{each $C_m, m\in [1,M]$}
            \STATE $w^{t+1}_m, f^{t+1}_m\leftarrow \text{ClientUpdate}(C_m, w^t)$
        \ENDFOR
        \IF{$t = L$}
        \STATE Normalized each $f_m$ and recalculate $\eta^*$ and $x^*_{n}$
        \IF{$x^*_{t+1} > x^*_t$}
            \STATE Pay additional training fees
        
        \ENDIF
        \ENDIF
        \STATE $w^{t+1}\leftarrow \frac{1}{M}\sum^M_{m=1} w^{t+1}_m$ // global model aggregation
\ENDFOR
\end{algorithmic}
\end{algorithm}


\begin{algorithm}
\caption{ClientUpdate}
\begin{algorithmic}[1]
\REQUIRE computing center $C_m$, global model parameter $w$, start time $t_s$, and end time $t_e$
\ENSURE local model parameter $w_m$, data quality contribution $f_m$
\STATE $w_m=w$
\FOR{each local epoch $i$ from 1 to $E$}
    \STATE $w_m \leftarrow w_m - \beta \nabla l(w_m)$
    \STATE record $loss(t_s)$ and $loss(t_e)$
\ENDFOR
\STATE $f_m = loss(t_s)- loss(t_e)$
\RETURN $w_m$, $f_m$
\end{algorithmic}
\end{algorithm}

Consequently, the steps of QD-RDFL are outlined as follows, and the details are presented in Alg. 1 and Alg. 2.

\begin{itemize}
    \item[\ding{202}] \textbf{Step 1: Initialization.} $S$ distributes the training tasks, and $S$, $D_n$ ($n = 1, \ldots, N$), and $C_m$ ($m = 1, \ldots, M$) are initialized.
    \item[\ding{203}] \textbf{Step 2: Initial Optimal Strategy.} $S$, $D$, and $C$ compute the optimal strategy profile $\langle \eta^*, Q^*, d^* \rangle$ for the first time based on Equ. \ref{equ-5} and Equ.\ref{equ-6}.
    \item[\ding{204}] \textbf{Step 3: Gale-Shapley Matching Data and Computing Resources.} $D$ is matched with $C$ according to the initial optimal strategy, ensuring secure data transfer.
    \item[\ding{205}] \textbf{Step 4: Federated Learning and Data Quality Evaluation.} $S$ and $C$ perform vanilla FL training and dynamically evaluate data quality $f_n$ of each $D_n$.
    \item[\ding{206}] \textbf{Step 5: Dynamic Adjustment of the Optimal Strategy.} $S$, $D$, and $C$ adjust the optimal strategy $\langle \eta^*, Q^* \rangle$ based on real training results.
\end{itemize}

In step 3 (line 5), we call the Gale-Shapley algorithm detailed in \textbf{Appendix \ref{App-3}}, which performs an optimal matching $G^*$ based on the calculated $d^*_m$ as preference. In Step 5, after the data quality $f_m$ is adjusted according to the training results (line 11), the model owner and the data owners sequentially adjust $\eta^*$ and $x^*_n$ at the specific training round $L$ (line 14). $L$ is set to the number of rounds that best reflects the quality of the model, depending on the dataset. Subsequently, data owners providing more data pay additional training fees to their corresponding computing centers.


\section{Experiments}\label{sec_4}
In this section, we perform comprehensive experiments to answer the following research questions.

\textbf{Q1:} Does our QD-RDFL exist the optimal strategy profile to reach SNE?

\textbf{Q2:} How does the data owner's initial data quality $f_n$ affect its personal and model owner's utility?

\textbf{Q3:}  How does the incentive effectiveness of our QD-RDFL compare to random and fixed strategies on different datasets?

\textbf{Q4:} How do different components (dynamic adjustment mechanisms) affect the global model performance?

\subsection{Experiments Setting}
We conduct extensive experiments on various vision datasets, including MNIST\cite{bib-37}, CIFAR-10\cite{bib-38}, and CIFAR-100\cite{bib-33}. To simulate the variation in data quality provided by different data owners, we add Gaussian noise with varying intensity to the data from different owners. We calculate the mean square error (MSE) of the images after and before noise addition and use 1-MSE to represent the initial data quality of the data owner. The actual $f_n$ is evaluated using Equ. \ref{eq:formula7}. Furthermore, to represent the relative position of $f_n$, we normalize its value. Other detailed settings of the experiment see \textbf{Appendix \ref{App-4}}. The matching results of the data centers, the utility, and the parameter analysis are detailed in \textbf{Appendix \ref{App-6}}. Our code is available at https://anonymous.4open.science/r/QD-RDFL-4B3F.

\subsection{Experimental Results}

\textbf{SNE verification (RQ1)}. To confirm the existence and uniqueness of the SNE in QD-RDFL, we set $N=10$ on the MNIST dataset and observe the variation of utility under different strategies of the model owner and the data owner, respectively.

The uniqueness of the model owner's utility $U_s$ is demonstrated in Fig. \subref*{fig2a}. We set the quantity and the strategies of the data owners remain constant, the curve of $U_s$ is convex with increasing $\eta$, with its highest point representing the optimal response. The optimal $\eta$ is 1.33 at the Stackelberg equilibrium, and the corresponding $U_S$ is about 0.57. The uniqueness of the Nash equilibrium among the data owners is demonstrated in Fig. \subref*{fig2b}. For the selected data owner $D_n$, we retain $x_{-n}$, the initial $f_N, N=10$ and $\eta$, and then vary $x_n$. The results show that $U_n$ is convex, the maximum $U_n$ of 0.0066 is obtained when $x_n$ is 0.087. The results indicate that there are unique SNE in our method. Each data owner can maximize utility by determining the data contribution strategy in our method.

\setlength{\abovecaptionskip}{0.1cm}  
\setlength{\belowcaptionskip}{0cm}  
\setlength{\textfloatsep}{0.1cm}      
\begin{figure}[tb]
  \centering
  \subfloat[]
  {\includegraphics[width=0.48\columnwidth]{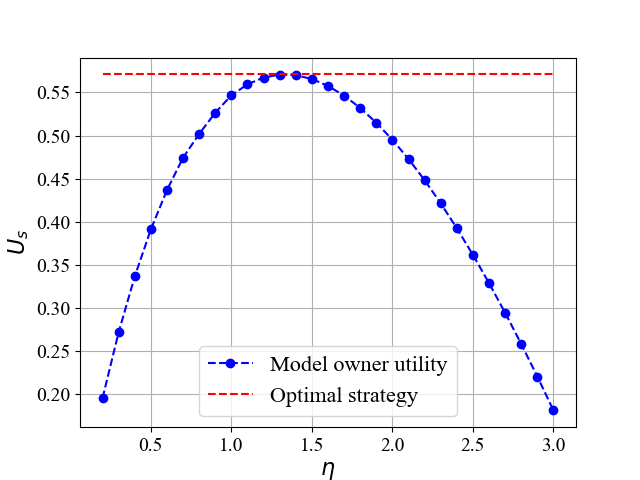}\label{fig2a}}
  \hspace{-3mm}     
  \subfloat[]
  {\includegraphics[width=0.48\columnwidth]{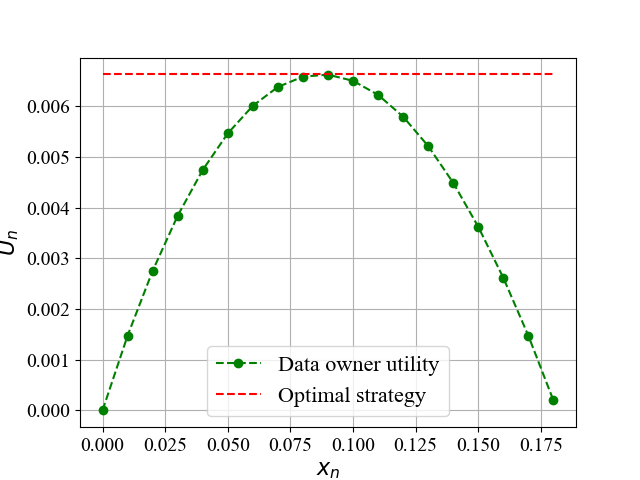}\label{fig2b}}
  \quad
  \caption{Uniqueness and existence of SNE of our method, (a) the utility of model owner with different $\eta$ and (b) the utility of data owner with different $q_n$.}
  \label{fig2}
\end{figure}

\setlength{\abovecaptionskip}{0.1cm}  
\setlength{\belowcaptionskip}{0cm}  
\setlength{\textfloatsep}{0.1cm}      
\begin{figure}[tb]
  \centering
  \subfloat[]
  {\includegraphics[width=0.51\columnwidth]{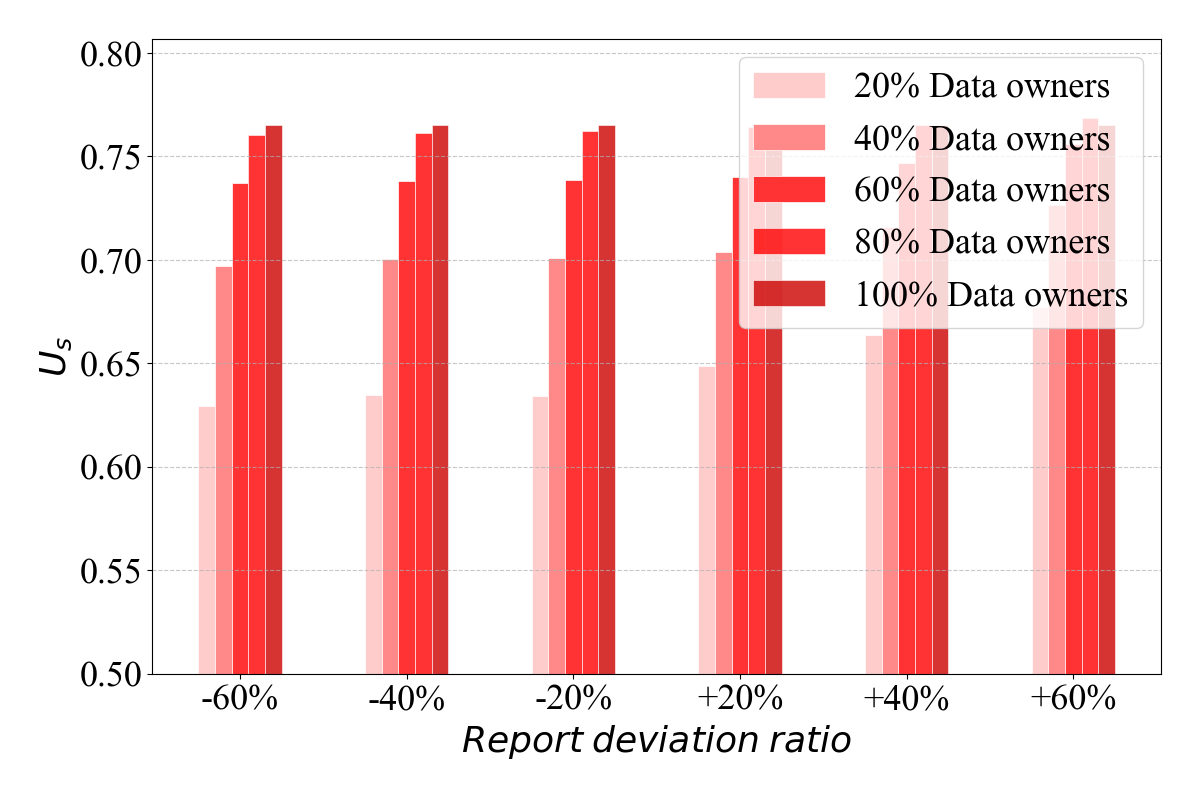}\label{fig3a}}
  \hspace{-2mm}     
  \subfloat[]
  {\includegraphics[width=0.48\columnwidth]{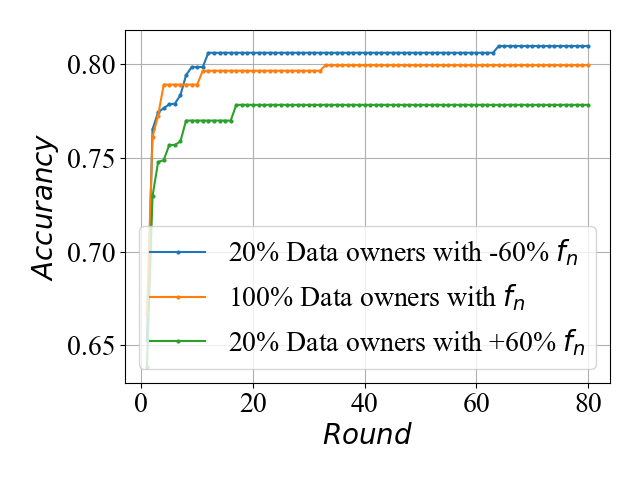}\label{fig3b}}
  \quad
  \caption{Effect of initial $f_n$ to $U_s$ and the accuracy of global model on MNIST, (a) comparison of $U_s$ with different proportions of data owners and different report deviation ratio $f_n$, and (b) the accuracy of global model with different $f_n$.}
  \label{fig3}
\end{figure}
  
\textbf{Effect of initial $f_n$ (RQ2)}. To observe the effect of the initial $f_n$ on the utility of the data owner and the utility of the model owner, we design the following two sets of experiments. 

\textbf{For data owner:} We select several data owners from $N, N=10$ for the experiment. For each chosen data owner, we fix the $f_n$ of the other data owners and increase or decrease the selected data owner’s $f_n$. We then observe the variation of $U_n$. \textbf{For model owner:} We increase or decrease all given data owner’s $f_n$ and observe the resulting variation of $U_s$.

For selected data owners, each utility $U_n$ increases as the deviation ratio of reported $f_n$ increases from -60\% to +60\% in the MNIST and CIFAR-10 datasets. For more details of the experiment, see Fig. \ref{fig-7} in \textbf{Appendix \ref{App-5}}. As shown, the results suggest that the personal utility of the data owner $U_n$ is sensitive to data quality contribution $f_n$. $U_n$ increases monotonically with the reported $f_n$. Therefore, a valid evaluation of the data quality contributions in training processes is reasonable and necessary since it prevents data owners from falsely reporting quality contributions to get additional utility.

\setlength{\abovecaptionskip}{0.1cm}  
\setlength{\belowcaptionskip}{-0.8cm}  
\setlength{\textfloatsep}{0.1cm}      
\begin{figure*}[htb]
  \centering
  \subfloat[MNIST]
  {\includegraphics[width=0.65\columnwidth]{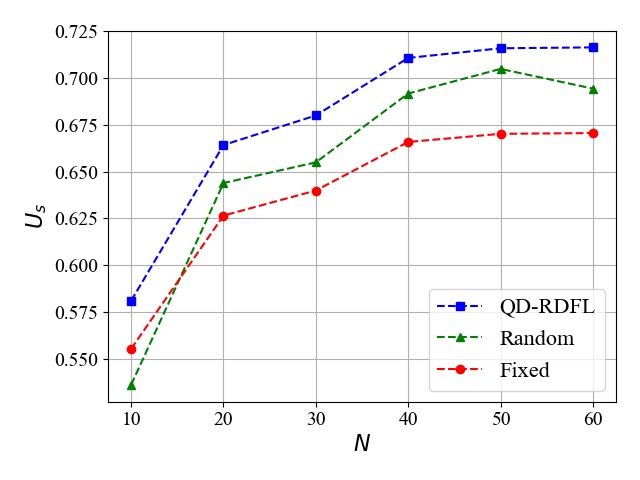}\label{fig4a}}
  \quad     
  \subfloat[CIFAR-10]
  {\includegraphics[width=0.65\columnwidth]{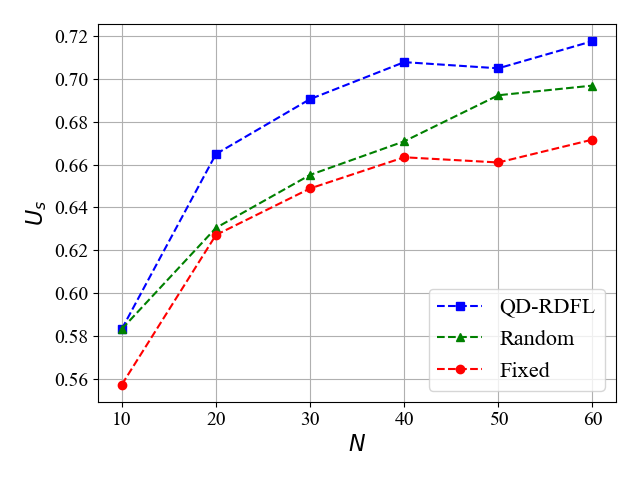}\label{fig4b}}
  \quad
  \subfloat[CIFAR-100]
  {\includegraphics[width=0.65\columnwidth]{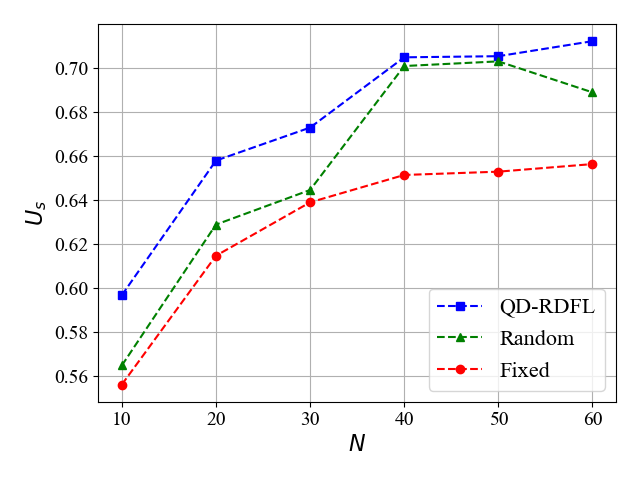}\label{fig4c}}
    \quad
  \caption{ $U_s$ with different number of data owners on different datasets.}
  \label{fig4}
\end{figure*}

\setlength{\abovecaptionskip}{0.1cm}  
\setlength{\textfloatsep}{-1.0cm}      
\begin{figure*}[htb]
  \centering
  \subfloat[MNIST]
   {\includegraphics[width=0.65\columnwidth]{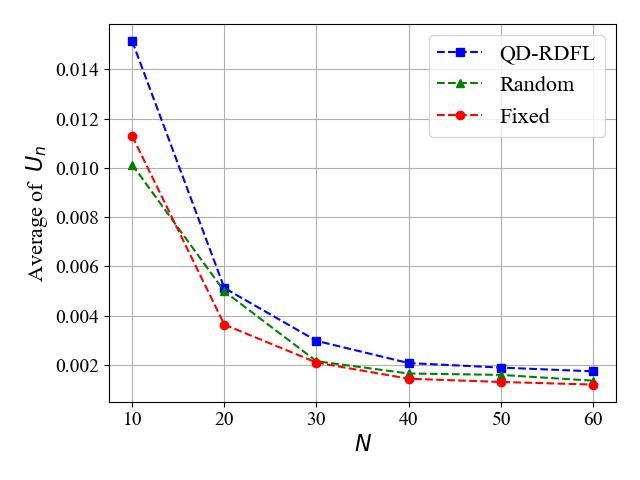}\label{fig5a}}
  \quad
  \subfloat[CIFAR-10]
  {\includegraphics[width=0.65\columnwidth]{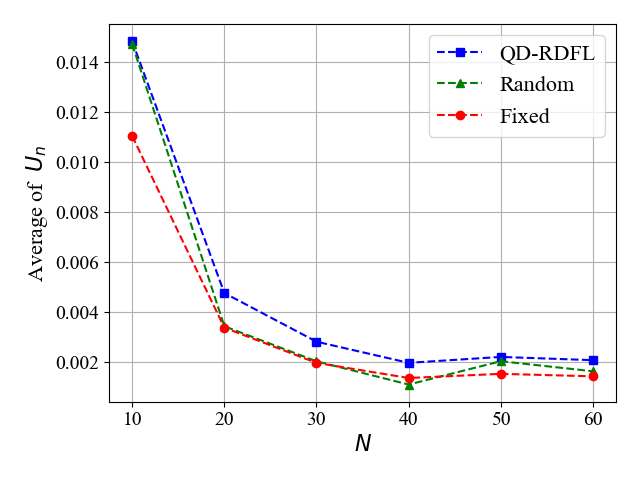}\label{fig5b}}
  \quad
  \subfloat[CIFAR-100]
  {\includegraphics[width=0.65\columnwidth]{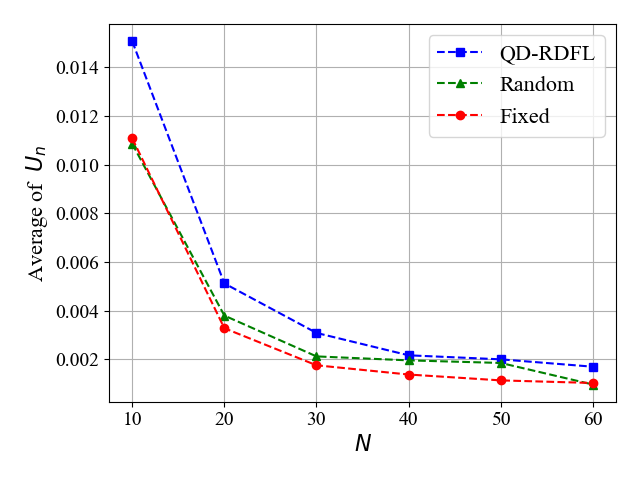}\label{fig5c}}
  \quad
  \caption{Average of $U_n$ with different number of data owners on different datasets.}
  \label{fig5}
\end{figure*}

\setlength{\abovecaptionskip}{0.1cm}  
\setlength{\belowcaptionskip}{0cm}  
\setlength{\textfloatsep}{0.1cm}      
\begin{figure*}[htb]
  \centering
  \subfloat[MNIST]
  {\includegraphics[width=0.65\columnwidth]{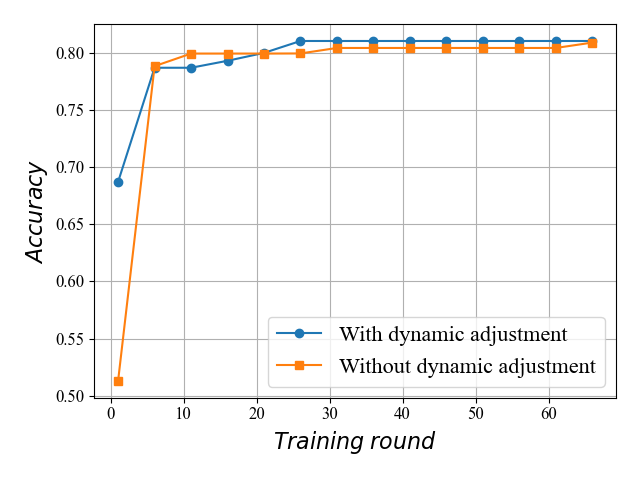}\label{fig6a}}
  \quad     
  \subfloat[CIFAR-10]
  {\includegraphics[width=0.65\columnwidth]{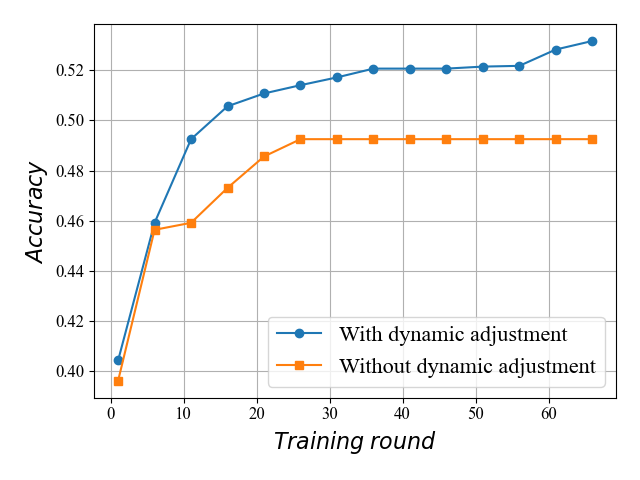}\label{fig6b}}
  \quad
  \subfloat[CIFAR-100]
  {\includegraphics[width=0.65\columnwidth]{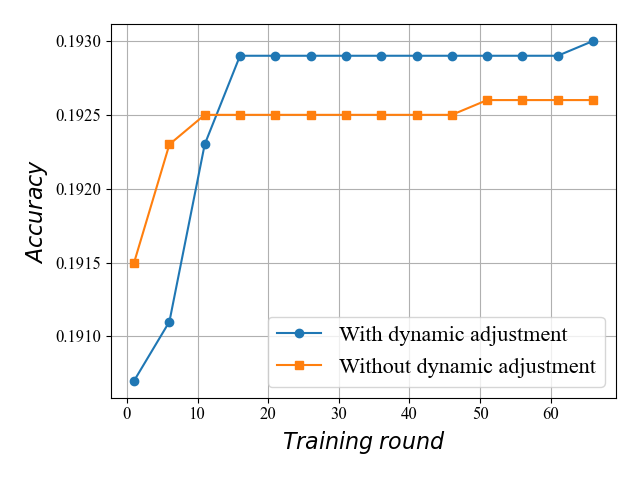}\label{fig6c}}
  \quad
  \caption{The global model performance achieved by QD-RDFL in each round on different datasets}
  \label{fig6}
\end{figure*}

For the model owner, on the MNIST dataset, we assume the varying total numbers of data owners ($20\%$ to $100\%$) with the varying deviation ratio of the reported $f_n$ ($-60\%$ to $60\%$) and observe the effect of the variation on the model owner's utility $U_s$ and global model performance (Accuracy). As shown in Fig. \subref*{fig3a}, $U_s$ increases with the increase of the report deviation ratio and the proportion of data owners. Since $f_n$ is relative, the proportion of false reports increases, resulting in the overall $f_n$ being relatively improved, which will lead to the inflated $U_s$ calculated. However, false reports lead the model owner to misjudge the model quality and obtain false utility. As the real utility is related to the global model's performance, we further explore the specific impact of report deviation on the global model's performance, in which $f_n$ of 20$\%$ data owners was down-regulated by 60$\%$ and up-regulated by 60$\%$ are set respectively. As shown in Fig.\subref*{fig3b}, the global model accuracy of the control group down-regulated by 60$\%$ is always higher than the actual situation, and the accuracy of the control group up-regulated by 60$\%$ is always lower than the actual situation. The results demonstrate that $f_n$ affects the accuracy and truly evaluates the utility of the model owner is necessary. Similar validation on CIFAR-10 is shown in Fig. \ref{fig8} in \textbf{Appendix \ref{App-5}}.



\textbf{Effect of QD-RDFL (RQ3)}. To verify the incentive effectiveness of our QD-RDFL, we perform comparison experiments with the following baselines on three different datasets.
\begin{itemize}
\item \textbf{Fixed $\eta$:} The model owner publishes an FL training task with a fixed payment value. Then, all users determine their strategies based on this payment.
\item \textbf{Random $\eta$:} The model owner publishes an FL training task with a random payment value, and all users determine their strategies based on this payment.
\item \textbf{Our method:} The model owner derives the optimal payment strategy using QD-RDFL and publishes the FL training task. Then, all users determine their strategies using this payment.
\end{itemize}

Varying numbers of data owners, we observe the utility of model owners $U_s$ and the average utility of $N$ data owners $U_n, N\in N$, respectively. As shown in Figs. \subref*{fig4a}, \subref*{fig4b} and \subref*{fig4c}, $U_s$ of QD-RDFL always make outstanding performance compared to the benchmarks compared to the benchmarks on different datasets. The results indicate that the QD-RDFL algorithm can encourage data owners to provide more quantity and higher quality data for FL, thus getting the optimal utility of the model owner. However, due to the diminishing marginal effect, the increased rate of $U_s$ gradually slows down. Figs. \subref*{fig5a}, \subref*{fig5b} and \subref*{fig5c} show the average of $U_n$ on the different datasets. As $N$ increases, the average of $U_n$ decreases. Because as $N$ increases, more data owners participate in FL training while the total model payment grows relatively slowly. As shown, the average of $U_n$ of our QD-RDFL always outperforms compared to the benchmarks. The results demonstrate that data owners can always get better utility, thus effectively encouraging them. It is worth noting that since each set of experiments generates an $\eta$ independently for the Random algorithm, the results always oscillate, but it is always lower than our method.

\textbf{Ablation study (RQ4)}. To evaluate the effectiveness of the proposed dynamic adjustment mechanism, we verify accuracy of the global model on different datasets ($N=10$) and design the following ablation experiment, in which \textbf{QD-RDFL without Dynamic Adjustment} is set. In QD-RDFL, three parties first determine the initial strategies based on the data owners reported $f_n$. Then, QD-RDFL evaluates the actual $f_n$ in the specific training round ($L=3$) and dynamically adjusts the strategies. QD-RDFL without Dynamic Adjustment means that the reported $f_n$ is always used.

In Fig. \ref{fig6}, we verify the accuracy of QD-RDFL and QD-RDFL without Dynamic Adjustment over 70 training iterations on different datasets. The results show that the accuracy of QD-RDFL is better than that of QD-RDFL without Dynamic Adjustment. It illustrates that the dynamic adjustment mechanism performs a more accurate evaluation of data quality, thereby successfully encouraging data owners to provide high-quality and large-quantity data and improving the performance of the global model.

\section{Conclusions and Future works}\label{sec_5}
In this work, we explore a method for data assetization through FL.
We first propose the resource-decoupled FL framework to address the challenges of data resource heterogeneity and the mismatch between data and computing resources. Building on this, to address the potential issue of participant willingness to engage, we formulate a Tripartite Stackelberg model. Our theoretical analysis guarantees the existence of a Stackelberg-Nash equilibrium that optimizes global utility and achieves the goal of maximizing the value of data assets. We use the contribution of data to the model as a standard for assessing data quality, incentivizing data owners to enhance their contributions to model quality in order to achieve better personal gains. Our comprehensive experiments show that our method can achieve optimal global utility in various scenarios. In future work, we aim to expand the experimental scope to validate the effectiveness of our algorithm across more datasets and further optimize the global model by incorporating a selection mechanism for the model owner.

\bibliographystyle{ACM-Reference-Format}
\bibliography{sample-base}


\begin{thebibliography}{42}


\ifx \showCODEN    \undefined \def \showCODEN     #1{\unskip}     \fi
\ifx \showDOI      \undefined \def \showDOI       #1{#1}\fi
\ifx \showISBNx    \undefined \def \showISBNx     #1{\unskip}     \fi
\ifx \showISBNxiii \undefined \def \showISBNxiii  #1{\unskip}     \fi
\ifx \showISSN     \undefined \def \showISSN      #1{\unskip}     \fi
\ifx \showLCCN     \undefined \def \showLCCN      #1{\unskip}     \fi
\ifx \shownote     \undefined \def \shownote      #1{#1}          \fi
\ifx \showarticletitle \undefined \def \showarticletitle #1{#1}   \fi
\ifx \showURL      \undefined \def \showURL       {\relax}        \fi
\providecommand\bibfield[2]{#2}
\providecommand\bibinfo[2]{#2}
\providecommand\natexlab[1]{#1}
\providecommand\showeprint[2][]{arXiv:#2}

\bibitem[Pei et~al\mbox{.}(2023)]%
        {bib-1}
\bibfield{author}{\bibinfo{person}{Jian Pei}, \bibinfo{person}{Raul~Castro Fernandez}, {and} \bibinfo{person}{Xiaohui Yu}.} \bibinfo{year}{2023}\natexlab{}.
\newblock \showarticletitle{Data and {AI} Model Markets: Opportunities for Data and Model Sharing, Discovery, and Integration}.
\newblock \bibinfo{journal}{\emph{Proc. {VLDB} Endow.}} \bibinfo{volume}{16}, \bibinfo{number}{12} (\bibinfo{year}{2023}), \bibinfo{pages}{3872--3873}.
\newblock


\bibitem[Fernandez et~al\mbox{.}(2020)]%
        {bib-2}
\bibfield{author}{\bibinfo{person}{Raul~Castro Fernandez}, \bibinfo{person}{Pranav Subramaniam}, {and} \bibinfo{person}{Michael~J. Franklin}.} \bibinfo{year}{2020}\natexlab{}.
\newblock \showarticletitle{Data Market Platforms: Trading Data Assets to Solve Data Problems}.
\newblock \bibinfo{journal}{\emph{Proc. {VLDB} Endow.}} \bibinfo{volume}{13}, \bibinfo{number}{11} (\bibinfo{year}{2020}), \bibinfo{pages}{1933--1947}.
\newblock


\bibitem[Badewitz et~al\mbox{.}(2022)]%
        {bib-3}
\bibfield{author}{\bibinfo{person}{Wolfgang Badewitz}, \bibinfo{person}{Christoph Hengesbach}, {and} \bibinfo{person}{Christof Weinhardt}.} \bibinfo{year}{2022}\natexlab{}.
\newblock \showarticletitle{Challenges of pricing data assets: a literature review}. In \bibinfo{booktitle}{\emph{24th {IEEE} Conference on Business Informatics, {CBI} 2022 - Volume 1, Amsterdam, Netherlands, June 15-17, 2022}}. \bibinfo{publisher}{{IEEE}}, \bibinfo{address}{Amsterdam, Netherlands}, \bibinfo{pages}{80--89}.
\newblock


\bibitem[Hao et~al\mbox{.}(2024)]%
        {bib-4}
\bibfield{author}{\bibinfo{person}{Jun Hao}, \bibinfo{person}{Jiaxin Yuan}, {and} \bibinfo{person}{Jianping Li}.} \bibinfo{year}{2024}\natexlab{}.
\newblock \showarticletitle{{HCEG:} {A} heterogeneous clustering ensemble learning approach with gravity-based strategy for data assets intelligent pricing}.
\newblock \bibinfo{journal}{\emph{Inf. Sci.}}  \bibinfo{volume}{678} (\bibinfo{year}{2024}), \bibinfo{pages}{121082}.
\newblock


\bibitem[Pei(2022)]%
        {bib-5}
\bibfield{author}{\bibinfo{person}{Jian Pei}.} \bibinfo{year}{2022}\natexlab{}.
\newblock \showarticletitle{A Survey on Data Pricing: From Economics to Data Science}.
\newblock \bibinfo{journal}{\emph{{IEEE} Trans. Knowl. Data Eng.}} \bibinfo{volume}{34}, \bibinfo{number}{10} (\bibinfo{year}{2022}), \bibinfo{pages}{4586--4608}.
\newblock


\bibitem[GDPR(2018)]%
        {bib-6}
European parliament and the council of the european union \bibinfo{year}{2018}\natexlab{}.
\newblock \bibinfo{booktitle}{\emph{General data protection regulation}}.
\newblock European parliament and the council of the european union.
\newblock
\urldef\tempurl%
\url{http://eur-lex.europa.eu/legal-content/EN/TXT/PDF/? uri=CELEX:32016R0679}
\showURL{%
Retrieved Dec 27, 2024 from \tempurl}


\bibitem[Bi et~al\mbox{.}(2024)]%
        {bib-7}
\bibfield{author}{\bibinfo{person}{Yuran Bi}, \bibinfo{person}{Jinfei Liu}, \bibinfo{person}{Chen Zhao}, \bibinfo{person}{Junyi Zhao}, \bibinfo{person}{Kui Ren}, {and} \bibinfo{person}{Li Xiong}.} \bibinfo{year}{2024}\natexlab{}.
\newblock \showarticletitle{Share: Stackelberg-Nash based Data Markets}. In \bibinfo{booktitle}{\emph{40th {ICDE} 2024, May 13-16, 2024}}. \bibinfo{publisher}{{IEEE}}, \bibinfo{address}{Utrecht, The Netherlands}, \bibinfo{pages}{3573--3586}.
\newblock


\bibitem[McMahan et~al\mbox{.}(2017)]%
        {bib-8}
\bibfield{author}{\bibinfo{person}{Brendan McMahan}, \bibinfo{person}{Eider Moore}, \bibinfo{person}{Daniel Ramage}, \bibinfo{person}{Seth Hampson}, {and} \bibinfo{person}{Blaise~Ag{\"{u}}era y Arcas}.} \bibinfo{year}{2017}\natexlab{}.
\newblock \showarticletitle{Communication-Efficient Learning of Deep Networks from Decentralized Data}. In \bibinfo{booktitle}{\emph{Proceedings of the 20t {AISTATS} 2017, 20-22 April 2017}}, Vol.~\bibinfo{volume}{54}. \bibinfo{publisher}{{PMLR}}, \bibinfo{address}{Fort Lauderdale, FL, {USA}}, \bibinfo{pages}{1273--1282}.
\newblock


\bibitem[Yang et~al\mbox{.}(2019a)]%
        {bib-9}
\bibfield{author}{\bibinfo{person}{Qiang Yang}, \bibinfo{person}{Yang Liu}, \bibinfo{person}{Yong Cheng}, \bibinfo{person}{Yan Kang}, \bibinfo{person}{Tianjian Chen}, {and} \bibinfo{person}{Han Yu}.} \bibinfo{year}{2019}\natexlab{a}.
\newblock \bibinfo{booktitle}{\emph{Federated Learning}}.
\newblock \bibinfo{publisher}{Morgan {\&} Claypool Publishers}, \bibinfo{address}{USA}.
\newblock


\bibitem[Yang et~al\mbox{.}(2019b)]%
        {bib-10}
\bibfield{author}{\bibinfo{person}{Qiang Yang}, \bibinfo{person}{Yang Liu}, \bibinfo{person}{Tianjian Chen}, {and} \bibinfo{person}{Yongxin Tong}.} \bibinfo{year}{2019}\natexlab{b}.
\newblock \showarticletitle{Federated Machine Learning: Concept and Applications}.
\newblock \bibinfo{journal}{\emph{{ACM} Trans. Intell. Syst. Technol.}} \bibinfo{volume}{10}, \bibinfo{number}{2} (\bibinfo{year}{2019}), \bibinfo{pages}{12:1--12:19}.
\newblock


\bibitem[Zhang et~al\mbox{.}(2021)]%
        {bib-41}
\bibfield{author}{\bibinfo{person}{C. Zhang}, \bibinfo{person}{S. Li}, {and} \bibinfo{person}{L. Xia}.} \bibinfo{year}{2021}\natexlab{}.
\newblock \showarticletitle{Federated Learning for Credit Risk Assessment}.
\newblock \bibinfo{journal}{\emph{Journal of Financial Data Science}} \bibinfo{volume}{3}, \bibinfo{number}{2} (\bibinfo{year}{2021}), \bibinfo{pages}{45--58}.
\newblock


\bibitem[Wang and Feng(2024)]%
        {bib-42}
\bibfield{author}{\bibinfo{person}{Chunkai Wang} {and} \bibinfo{person}{Jian Feng}.} \bibinfo{year}{2024}\natexlab{}.
\newblock \showarticletitle{FedPV-FS: {A} Feature Selection Method for Federated Learning in Insurance Precision Marketing}. In \bibinfo{booktitle}{\emph{Intelligent Information Processing {XII} - 13th {IFIP} {TC} 12 International Conference, {IIP} 2024, Shenzhen, China, May 3-6, 2024, Proceedings, Part {I}}} \emph{(\bibinfo{series}{{IFIP} Advances in Information and Communication Technology}, Vol.~\bibinfo{volume}{703})}, \bibfield{editor}{\bibinfo{person}{Zhongzhi Shi}, \bibinfo{person}{Jim T{\o}rresen}, {and} \bibinfo{person}{Shengxiang Yang}} (Eds.). \bibinfo{publisher}{Springer}, \bibinfo{pages}{425--437}.
\newblock
\urldef\tempurl%
\url{https://doi.org/10.1007/978-3-031-57808-3\_31}
\showDOI{\tempurl}


\bibitem[Rieke et~al\mbox{.}(2020)]%
        {bib-43}
\bibfield{author}{\bibinfo{person}{Nicola Rieke}, \bibinfo{person}{Jonny Hancox}, \bibinfo{person}{Wenqi Li}, \bibinfo{person}{Fausto Milletar{\`{\i}}}, \bibinfo{person}{Holger~R. Roth}, \bibinfo{person}{Shadi Albarqouni}, \bibinfo{person}{Spyridon Bakas}, \bibinfo{person}{Mathieu~N. Galtier}, \bibinfo{person}{Bennett~A. Landman}, \bibinfo{person}{Klaus~H. Maier{-}Hein}, \bibinfo{person}{S{\'{e}}bastien Ourselin}, \bibinfo{person}{Micah~J. Sheller}, \bibinfo{person}{Ronald~M. Summers}, \bibinfo{person}{Andrew Trask}, \bibinfo{person}{Daguang Xu}, \bibinfo{person}{Maximilian Baust}, {and} \bibinfo{person}{M.~Jorge Cardoso}.} \bibinfo{year}{2020}\natexlab{}.
\newblock \showarticletitle{The future of digital health with federated learning}.
\newblock \bibinfo{journal}{\emph{npj Digit. Medicine}}  \bibinfo{volume}{3} (\bibinfo{year}{2020}).
\newblock
\urldef\tempurl%
\url{https://doi.org/10.1038/S41746-020-00323-1}
\showDOI{\tempurl}


\bibitem[Luo et~al\mbox{.}(2024)]%
        {bib-11}
\bibfield{author}{\bibinfo{person}{Shuyun Luo}, \bibinfo{person}{Wushuang Wang}, \bibinfo{person}{Yifei Li}, \bibinfo{person}{Jian Hou}, {and} \bibinfo{person}{Lu Zhang}.} \bibinfo{year}{2024}\natexlab{}.
\newblock \showarticletitle{Data-Quality Aware Incentive Mechanism Based on Stackelberg Game in Mobile Edge Computing}.
\newblock \bibinfo{journal}{\emph{{IEICE} Trans. Fundam. Electron. Commun. Comput. Sci.}} \bibinfo{volume}{107}, \bibinfo{number}{6} (\bibinfo{year}{2024}), \bibinfo{pages}{873--880}.
\newblock


\bibitem[Shi et~al\mbox{.}(2024)]%
        {bib-12}
\bibfield{author}{\bibinfo{person}{Zhuan Shi}, \bibinfo{person}{Lan Zhang}, \bibinfo{person}{Zhenyu Yao}, \bibinfo{person}{Lingjuan Lyu}, {and} \bibinfo{person}{et.al.}} \bibinfo{year}{2024}\natexlab{}.
\newblock \showarticletitle{FedFAIM: {A} Model Performance-Based Fair Incentive Mechanism for Federated Learning}.
\newblock \bibinfo{journal}{\emph{{IEEE} Trans. Big Data}} \bibinfo{volume}{10}, \bibinfo{number}{6} (\bibinfo{year}{2024}), \bibinfo{pages}{1038--1050}.
\newblock


\bibitem[Sun et~al\mbox{.}(2024)]%
        {bib-13}
\bibfield{author}{\bibinfo{person}{Yongjiao Sun}, \bibinfo{person}{Boyang Li}, \bibinfo{person}{Kai Yang}, \bibinfo{person}{Xin Bi}, {and} \bibinfo{person}{Xiangning Zhao}.} \bibinfo{year}{2024}\natexlab{}.
\newblock \showarticletitle{TiFLCS-MARP: Client selection and model pricing for federated learning in data markets}.
\newblock \bibinfo{journal}{\emph{Expert Syst. Appl.}}  \bibinfo{volume}{245} (\bibinfo{year}{2024}), \bibinfo{pages}{123071}.
\newblock


\bibitem[Brown et~al\mbox{.}(2020)]%
        {bib-14}
\bibfield{author}{\bibinfo{person}{Tom~B. Brown}, \bibinfo{person}{Benjamin Mann}, \bibinfo{person}{Nick Ryder}, \bibinfo{person}{Melanie Subbiah}, {and} \bibinfo{person}{et.al.}} \bibinfo{year}{2020}\natexlab{}.
\newblock \showarticletitle{Language Models are Few-Shot Learners}. In \bibinfo{booktitle}{\emph{Annual Conference on Neural Information Processing Systems 2020, NeurIPS 2020}}. \bibinfo{publisher}{Curran Associates, Inc.}, \bibinfo{address}{virtual}, \bibinfo{pages}{1877--1901}.
\newblock


\bibitem[Devlin et~al\mbox{.}(2019)]%
        {bib-15}
\bibfield{author}{\bibinfo{person}{Jacob Devlin}, \bibinfo{person}{Ming{-}Wei Chang}, \bibinfo{person}{Kenton Lee}, {and} \bibinfo{person}{Kristina Toutanova}.} \bibinfo{year}{2019}\natexlab{}.
\newblock \showarticletitle{{BERT:} Pre-training of Deep Bidirectional Transformers for Language Understanding}. In \bibinfo{booktitle}{\emph{Proceedings of {NAACL-HLT} 2019, June 2-7, 2019, Volume 1}}. \bibinfo{publisher}{Association for Computational Linguistics}, \bibinfo{address}{Minneapolis, MN, USA}, \bibinfo{pages}{4171--4186}.
\newblock


\bibitem[Kaplan et~al\mbox{.}(2020)]%
        {bib-16}
\bibfield{author}{\bibinfo{person}{Jared Kaplan}, \bibinfo{person}{Sam McCandlish}, \bibinfo{person}{Tom Henighan}, \bibinfo{person}{Tom~B. Brown}, {and} \bibinfo{person}{et.al.}} \bibinfo{year}{2020}\natexlab{}.
\newblock \showarticletitle{Scaling Laws for Neural Language Models}.
\newblock \bibinfo{journal}{\emph{CoRR}}  \bibinfo{volume}{abs/2001.08361} (\bibinfo{year}{2020}), \bibinfo{pages}{1--19}.
\newblock


\bibitem[Hu et~al\mbox{.}(2024)]%
        {bib-17}
\bibfield{author}{\bibinfo{person}{Gangqiang Hu}, \bibinfo{person}{Jianmin Han}, \bibinfo{person}{Jianfeng Lu}, \bibinfo{person}{Juan Yu}, {and} \bibinfo{person}{et.al.}} \bibinfo{year}{2024}\natexlab{}.
\newblock \showarticletitle{Game-Theoretic Design of Quality-Aware Incentive Mechanisms for Hierarchical Federated Learning}.
\newblock \bibinfo{journal}{\emph{{IEEE} Internet Things J.}} \bibinfo{volume}{11}, \bibinfo{number}{15} (\bibinfo{year}{2024}), \bibinfo{pages}{26033--26045}.
\newblock


\bibitem[Zhan et~al\mbox{.}(2020)]%
        {bib-22}
\bibfield{author}{\bibinfo{person}{Yufeng Zhan}, \bibinfo{person}{Peng Li}, \bibinfo{person}{Zhihao Qu}, \bibinfo{person}{Deze Zeng}, {and} \bibinfo{person}{Song Guo}.} \bibinfo{year}{2020}\natexlab{}.
\newblock \showarticletitle{A Learning-Based Incentive Mechanism for Federated Learning}.
\newblock \bibinfo{journal}{\emph{{IEEE} Internet Things J.}} \bibinfo{volume}{7}, \bibinfo{number}{7} (\bibinfo{year}{2020}), \bibinfo{pages}{6360--6368}.
\newblock


\bibitem[Huang et~al\mbox{.}(2024)]%
        {bib-23}
\bibfield{author}{\bibinfo{person}{Jiwei Huang}, \bibinfo{person}{Bowen Ma}, \bibinfo{person}{Ming Wang}, \bibinfo{person}{Xiaokang Zhou}, {and} \bibinfo{person}{et.al.}} \bibinfo{year}{2024}\natexlab{}.
\newblock \showarticletitle{Incentive Mechanism Design of Federated Learning for Recommendation Systems in {MEC}}.
\newblock \bibinfo{journal}{\emph{{IEEE} Trans. Consumer Electron.}} \bibinfo{volume}{70}, \bibinfo{number}{1} (\bibinfo{year}{2024}), \bibinfo{pages}{2596--2607}.
\newblock


\bibitem[Zeng et~al\mbox{.}(2020)]%
        {bib-24}
\bibfield{author}{\bibinfo{person}{Rongfei Zeng}, \bibinfo{person}{Shixun Zhang}, \bibinfo{person}{Jiaqi Wang}, {and} \bibinfo{person}{Xiaowen Chu}.} \bibinfo{year}{2020}\natexlab{}.
\newblock \showarticletitle{FMore: An Incentive Scheme of Multi-dimensional Auction for Federated Learning in {MEC}}. In \bibinfo{booktitle}{\emph{40th {IEEE} International Conference on Distributed Computing Systems, Singapore}}. \bibinfo{publisher}{{IEEE}}, \bibinfo{pages}{278--288}.
\newblock


\bibitem[Zheng et~al\mbox{.}(2024)]%
        {bib-25}
\bibfield{author}{\bibinfo{person}{Zhaohua Zheng}, \bibinfo{person}{Yiming Hong}, \bibinfo{person}{Keqiu Li}, {and} \bibinfo{person}{Qiquan Chen}.} \bibinfo{year}{2024}\natexlab{}.
\newblock \showarticletitle{An Incentive Mechanism Based on AoU, Data Quality, and Data Quantity for Federated Learning}. In \bibinfo{booktitle}{\emph{27th International Conference on Computer Supported Cooperative Work in Design}}. \bibinfo{publisher}{{IEEE}}, \bibinfo{pages}{1832--1837}.
\newblock


\bibitem[Sun et~al\mbox{.}(2022)]%
        {bib-26}
\bibfield{author}{\bibinfo{person}{Peng Sun}, \bibinfo{person}{Xu Chen}, \bibinfo{person}{Guocheng Liao}, {and} \bibinfo{person}{Jianwei Huang}.} \bibinfo{year}{2022}\natexlab{}.
\newblock \showarticletitle{A Profit-Maximizing Model Marketplace with Differentially Private Federated Learning}. In \bibinfo{booktitle}{\emph{{IEEE} {INFOCOM} 2022 - {IEEE} Conference on Computer Communications, London, United Kingdom, May 2-5, 2022}}. \bibinfo{publisher}{{IEEE}}, \bibinfo{pages}{1439--1448}.
\newblock


\bibitem[Tang et~al\mbox{.}(2024)]%
        {bib-27}
\bibfield{author}{\bibinfo{person}{Changbing Tang}, \bibinfo{person}{Baosen Yang}, \bibinfo{person}{Xiaodong Xie}, \bibinfo{person}{Guanrong Chen}, \bibinfo{person}{Mohammed Abdulaziz~Aide Al{-}qaness}, {and} \bibinfo{person}{Yang Liu}.} \bibinfo{year}{2024}\natexlab{}.
\newblock \showarticletitle{An Incentive Mechanism for Federated Learning: {A} Continuous Zero-Determinant Strategy Approach}.
\newblock \bibinfo{journal}{\emph{{IEEE} {CAA} J. Autom. Sinica}} \bibinfo{volume}{11}, \bibinfo{number}{1} (\bibinfo{year}{2024}), \bibinfo{pages}{88--102}.
\newblock


\bibitem[Lin et~al\mbox{.}(2024)]%
        {bib-28}
\bibfield{author}{\bibinfo{person}{Yijing Lin}, \bibinfo{person}{Zhipeng Gao}, \bibinfo{person}{Hongyang Du}, \bibinfo{person}{Dusit Niyato}, \bibinfo{person}{Jiawen Kang}, {and} \bibinfo{person}{Xiaoyuan Liu}.} \bibinfo{year}{2024}\natexlab{}.
\newblock \showarticletitle{Incentive and Dynamic Client Selection for Federated Unlearning}. In \bibinfo{booktitle}{\emph{Proceedings of the {ACM} on Web Conference 2024, {WWW} 2024, Singapore, May 13-17, 2024}}. \bibinfo{publisher}{{ACM}}, \bibinfo{pages}{2936--2944}.
\newblock


\bibitem[Baghcheband et~al\mbox{.}(2024)]%
        {bib-18}
\bibfield{author}{\bibinfo{person}{Hajar Baghcheband}, \bibinfo{person}{Carlos Soares}, {and} \bibinfo{person}{Lu{\'{\i}}s~Paulo Reis}.} \bibinfo{year}{2024}\natexlab{}.
\newblock \showarticletitle{Machine Learning Data Market Based on Multiagent Systems}.
\newblock \bibinfo{journal}{\emph{{IEEE} Internet Comput.}} \bibinfo{volume}{28}, \bibinfo{number}{4} (\bibinfo{year}{2024}), \bibinfo{pages}{7--13}.
\newblock


\bibitem[Tarun et~al\mbox{.}(2024)]%
        {bib-19}
\bibfield{author}{\bibinfo{person}{Ayush~K. Tarun}, \bibinfo{person}{Vikram~S. Chundawat}, \bibinfo{person}{Murari Mandal}, \bibinfo{person}{Hong~Ming Tan}, {and} \bibinfo{person}{et.al.}} \bibinfo{year}{2024}\natexlab{}.
\newblock \showarticletitle{EcoVal: An Efficient Data Valuation Framework for Machine Learning}. In \bibinfo{booktitle}{\emph{Proceedings of the 30th {ACM} {SIGKDD} 2024, Barcelona, Spain, August 25-29, 2024}}. \bibinfo{publisher}{{ACM}}, \bibinfo{pages}{2866--2875}.
\newblock


\bibitem[Shi et~al\mbox{.}(2024)]%
        {bib-20}
\bibfield{author}{\bibinfo{person}{Zhuan Shi}, \bibinfo{person}{Lan Zhang}, \bibinfo{person}{Zhenyu Yao}, \bibinfo{person}{Lingjuan Lyu}, {and} \bibinfo{person}{et.al.}} \bibinfo{year}{2024}\natexlab{}.
\newblock \showarticletitle{FedFAIM: {A} Model Performance-Based Fair Incentive Mechanism for Federated Learning}.
\newblock \bibinfo{journal}{\emph{{IEEE} Trans. Big Data}} \bibinfo{volume}{10}, \bibinfo{number}{6} (\bibinfo{year}{2024}), \bibinfo{pages}{1038--1050}.
\newblock


\bibitem[Yang et~al\mbox{.}(2024)]%
        {bib-21}
\bibfield{author}{\bibinfo{person}{Chengyi Yang}, \bibinfo{person}{Jia Liu}, \bibinfo{person}{Hao Sun}, \bibinfo{person}{Tongzhi Li}, {and} \bibinfo{person}{Zengxiang Li}.} \bibinfo{year}{2024}\natexlab{}.
\newblock \showarticletitle{WTDP-Shapley: Efficient and Effective Incentive Mechanism in Federated Learning for Intelligent Safety Inspection}.
\newblock \bibinfo{journal}{\emph{{IEEE} Trans. Big Data}} \bibinfo{volume}{10}, \bibinfo{number}{6} (\bibinfo{year}{2024}), \bibinfo{pages}{1028--1037}.
\newblock


\bibitem[Strubell et~al\mbox{.}(2019)]%
        {bib-34}
\bibfield{author}{\bibinfo{person}{Emma Strubell}, \bibinfo{person}{Ananya Ganesh}, {and} \bibinfo{person}{Andrew McCallum}.} \bibinfo{year}{2019}\natexlab{}.
\newblock \showarticletitle{Energy and Policy Considerations for Deep Learning in {NLP}}. In \bibinfo{booktitle}{\emph{Proceedings of the 57th Conference of the Association for Computational Linguistics, {ACL} 2019, Florence, Italy, July 28- August 2, 2019, Volume 1: Long Papers}}, \bibfield{editor}{\bibinfo{person}{Anna Korhonen}, \bibinfo{person}{David~R. Traum}, {and} \bibinfo{person}{Llu{\'{\i}}s M{\`{a}}rquez}} (Eds.). \bibinfo{publisher}{Association for Computational Linguistics}, \bibinfo{pages}{3645--3650}.
\newblock
\urldef\tempurl%
\url{https://doi.org/10.18653/V1/P19-1355}
\showDOI{\tempurl}


\bibitem[Sze et~al\mbox{.}(2017)]%
        {bib-35}
\bibfield{author}{\bibinfo{person}{Vivienne Sze}, \bibinfo{person}{Yu-Hsin Chen}, \bibinfo{person}{Tien-Ju Yang}, {and} \bibinfo{person}{Joel~S. Emer}.} \bibinfo{year}{2017}\natexlab{}.
\newblock \showarticletitle{Efficient Processing of Deep Neural Networks: A Tutorial and Survey}.
\newblock \bibinfo{journal}{\emph{Proc. IEEE}} \bibinfo{volume}{105}, \bibinfo{number}{12} (\bibinfo{year}{2017}), \bibinfo{pages}{2295--2329}.
\newblock
\urldef\tempurl%
\url{https://doi.org/10.1109/JPROC.2017.2761740}
\showDOI{\tempurl}


\bibitem[Samsi et~al\mbox{.}(2023)]%
        {bib-36}
\bibfield{author}{\bibinfo{person}{Siddharth Samsi}, \bibinfo{person}{Dan Zhao}, \bibinfo{person}{Joseph McDonald}, \bibinfo{person}{Baolin Li}, \bibinfo{person}{Adam Michaleas}, \bibinfo{person}{Michael Jones}, \bibinfo{person}{William Bergeron}, \bibinfo{person}{Jeremy Kepner}, \bibinfo{person}{Devesh Tiwari}, {and} \bibinfo{person}{Vijay Gadepally}.} \bibinfo{year}{2023}\natexlab{}.
\newblock \showarticletitle{From Words to Watts: Benchmarking the Energy Costs of Large Language Model Inference}. In \bibinfo{booktitle}{\emph{2023 IEEE High Performance Extreme Computing Conference (HPEC)}}. \bibinfo{pages}{1--9}.
\newblock
\urldef\tempurl%
\url{https://doi.org/10.1109/HPEC58863.2023.10363447}
\showDOI{\tempurl}


\bibitem[Olszewski(2007)]%
        {bib-29}
\bibfield{author}{\bibinfo{person}{Wojciech Olszewski}.} \bibinfo{year}{2007}\natexlab{}.
\newblock \showarticletitle{George J. Mailath, Larry Samuelson, , Repeated Games and Reputations: Long-Run Relationships {(2006)} Oxford University Press, New York, {NY} 978-0-19-530079-6 pages i-xviii and 1-645}.
\newblock \bibinfo{journal}{\emph{Games Econ. Behav.}} \bibinfo{volume}{59}, \bibinfo{number}{2} (\bibinfo{year}{2007}), \bibinfo{pages}{408--410}.
\newblock
\urldef\tempurl%
\url{https://doi.org/10.1016/J.GEB.2007.03.002}
\showDOI{\tempurl}


\bibitem[Li et~al\mbox{.}(2022)]%
        {bib-39}
\bibfield{author}{\bibinfo{person}{Zun Li}, \bibinfo{person}{Feiran Jia}, \bibinfo{person}{Aditya Mate}, \bibinfo{person}{Shahin Jabbari}, \bibinfo{person}{Mithun Chakraborty}, \bibinfo{person}{Milind Tambe}, {and} \bibinfo{person}{Yevgeniy Vorobeychik}.} \bibinfo{year}{2022}\natexlab{}.
\newblock \showarticletitle{Solving structured hierarchical games using differential backward induction}. In \bibinfo{booktitle}{\emph{Uncertainty in Artificial Intelligence, Proceedings of the Thirty-Eighth Conference on Uncertainty in Artificial Intelligence, {UAI} 2022, 1-5 August 2022, Eindhoven, The Netherlands}} \emph{(\bibinfo{series}{Proceedings of Machine Learning Research}, Vol.~\bibinfo{volume}{180})}, \bibfield{editor}{\bibinfo{person}{James Cussens} {and} \bibinfo{person}{Kun Zhang}} (Eds.). \bibinfo{publisher}{{PMLR}}, \bibinfo{pages}{1107--1117}.
\newblock
\urldef\tempurl%
\url{https://proceedings.mlr.press/v180/li22a.html}
\showURL{%
\tempurl}


\bibitem[Kicsiny et~al\mbox{.}(2014)]%
        {bib-40}
\bibfield{author}{\bibinfo{person}{R. Kicsiny}, \bibinfo{person}{Z. Varga}, {and} \bibinfo{person}{A. Scarelli}.} \bibinfo{year}{2014}\natexlab{}.
\newblock \showarticletitle{Backward induction algorithm for a class of closed-loop Stackelberg games}.
\newblock \bibinfo{journal}{\emph{European Journal of Operational Research}} \bibinfo{volume}{237}, \bibinfo{number}{3} (\bibinfo{year}{2014}), \bibinfo{pages}{1021--1036}.
\newblock
\showISSN{0377-2217}
\urldef\tempurl%
\url{https://doi.org/10.1016/j.ejor.2014.02.057}
\showDOI{\tempurl}


\bibitem[Teo et~al\mbox{.}(2001)]%
        {bib-30}
\bibfield{author}{\bibinfo{person}{Chung{-}Piaw Teo}, \bibinfo{person}{Jay Sethuraman}, {and} \bibinfo{person}{Wee{-}Peng Tan}.} \bibinfo{year}{2001}\natexlab{}.
\newblock \showarticletitle{Gale-Shapley Stable Marriage Problem Revisited: Strategic Issues and Applications}.
\newblock \bibinfo{journal}{\emph{Manag. Sci.}} \bibinfo{volume}{47}, \bibinfo{number}{9} (\bibinfo{year}{2001}), \bibinfo{pages}{1252--1267}.
\newblock
\urldef\tempurl%
\url{https://doi.org/10.1287/MNSC.47.9.1252.9784}
\showDOI{\tempurl}


\bibitem[Gokhale et~al\mbox{.}(2024)]%
        {bib-31}
\bibfield{author}{\bibinfo{person}{Kartik Gokhale}, \bibinfo{person}{Amit~Kumar Mallik}, \bibinfo{person}{Ankit~Kumar Misra}, {and} \bibinfo{person}{Swaprava Nath}.} \bibinfo{year}{2024}\natexlab{}.
\newblock \showarticletitle{A Gale-Shapley View of Unique Stable Marriages}. In \bibinfo{booktitle}{\emph{{ECAI} 2024 - 27th European Conference on Artificial Intelligence, 19-24 October 2024, Santiago de Compostela, Spain}} \emph{(\bibinfo{series}{Frontiers in Artificial Intelligence and Applications}, Vol.~\bibinfo{volume}{392})}. \bibinfo{publisher}{{IOS} Press}, \bibinfo{pages}{3652--3659}.
\newblock


\bibitem[LeCun and Cortes(2005)]%
        {bib-37}
\bibfield{author}{\bibinfo{person}{Y. LeCun} {and} \bibinfo{person}{C. Cortes}.} \bibinfo{year}{2005}\natexlab{}.
\newblock \bibinfo{title}{The {MNIST} database of handwritten digits}.
\newblock \bibinfo{howpublished}{[Online]. Available: \url{http://yann.lecun.com/exdb/mnist/}}.
\newblock


\bibitem[Ayi and El-Sharkawy(2020)]%
        {bib-38}
\bibfield{author}{\bibinfo{person}{M. Ayi} {and} \bibinfo{person}{M. El-Sharkawy}.} \bibinfo{year}{2020}\natexlab{}.
\newblock \showarticletitle{{RMNv2: Reduced MobileNet V2 for CIFAR10}}. In \bibinfo{booktitle}{\emph{Proc. 10th Annu. Comput. Commun. Workshop Conf. (CCWC)}}. \bibinfo{pages}{287--292}.
\newblock


\bibitem[Krizhevsky(2009)]%
        {bib-33}
\bibfield{author}{\bibinfo{person}{Alex Krizhevsky}.} \bibinfo{year}{2009}\natexlab{}.
\newblock \showarticletitle{Learning Multiple Layers of Features from Tiny Images}.
\newblock
\urldef\tempurl%
\url{https://api.semanticscholar.org/CorpusID:18268744}
\showURL{%
\tempurl}


\end{thebibliography}

\appendix

\section{Appendix}

\subsection{Proof of Lemma 2}\label{App-1}
In order to give the optimal Nash equilibrium solution for $N$ data owners, we give the following proof.
According to Equ.\ref{equ-8}, we set this first derivative equal to 0 for each data owner, then we get:
\begin{equation}\label{eq:formula 16}
    \frac{\eta}{\sum^N_{i=1}q_i}-\frac{\eta q_n}{(\sum^N_{i=1}q_i)^2}=\frac{\lambda\rho}{f_n}
\end{equation}
And then we consider $N$ data owners:

\begin{equation}\label{eq:formula 17}
\left\{
\begin{aligned}
\frac{\eta}{\sum^N_{i=1}q_i}&-\frac{\eta q_1}{(\sum^N_{i=1}q_i)^2}=\frac{\lambda\rho}{f_1}\\
\frac{\eta}{\sum^N_{i=1}q_i}&-\frac{\eta q_2}{(\sum^N_{i=1}q_i)^2}=\frac{\lambda\rho}{f_2}\\
&\vdots\\
\frac{\eta}{\sum^N_{i=1}q_i}&-\frac{\eta q_N}{(\sum^N_{i=1}q_i)^2}=\frac{\lambda\rho}{f_N}\\
\end{aligned}
\right.
\end{equation}
We sum up the Equ.\ref{eq:formula 17}:
\begin{equation}\label{eq:formula 18}
(N-1)\eta=\sum^N_{i=1}\frac{\lambda\rho}{f_i}\sum^N_{i=1}q_i
\end{equation}
So we get:
\begin{equation}\label{eq:formula 19}
\sum^N_{i=1}q_i=\frac{(N-1)\eta}{\sum^N_{i=1}\frac{\lambda\rho}{f_i}}
\end{equation}

We plug Equ.\ref{eq:formula 19} into Equ.\ref{eq:formula 16} and get:
\begin{equation}\label{eq:formula 20}
\sum_{i \neq n}q_i=\frac{(N-1)^2\eta^2}{f_n\lambda\rho(\sum^N_{i=1}\frac{1}{f_i})^2}
\end{equation}

Thus, the utility function is quasi-concave, so a unique Nash equilibrium exists.

\subsection{Proof of Theorem 1}\label{App-2}

We can follow the proof of Theorem 1.

The existence and uniqueness of SNE can be deduced by the property that the strategy space is a convex and compact subspace of the Euclidean space while the utility functions are concave. In the case of the model owner, it can be justified that the optimal utility $U_s$ is obtained only at $\eta^*$ derived by direct derivation due to the strictly concave property of Equ. \ref{equ-4}, leading to the first inequality in Def. \ref{def-2} holding uniquely at $\eta^*$, according to Lemma \ref{lemma-3}. Similarly, a unique Nash equilibrium can be justified, and the second inequality holds for every data owner at $q_n^*$, according to Lemma \ref{lemma-2}. However, for privacy reasons, we don't split the data in the computing center; instead, we use the Gale-Shapley algorithm to achieve one-time matching with the data owner according to the optimal $d_m^*$, according to Lemma \ref{lemma-1}. According to \cite{bib-30,bib-31}, the Gale-Shapley algorithm satisfies the relaxed Nash equilibrium, thus satisfying the third inequality. Therefore, SNE exists since the set of inequalities in Equ. \ref{equ-6} can be satisfied at $\left\langle \eta^*, Q^*, G^* \right\rangle$.

\subsection{Gale-Shapley Algorithm}\label{App-3}
In this work, we adopt the Gale-Shapley algorithm to find the match between the computing centers and the data owners. The details and pseudocode are shown in Alg.3.

\begin{algorithm}[h]
\caption{Gale-Shapley}
\begin{algorithmic}[1]
\REQUIRE $D=\{D_1, D_2, ..., D_N\}$, $C=\{C_1, C_2, ... , C_M\}$, and $\sigma_m$
\ENSURE $G=\{G_1, G_2, ... , G_N\}$
\STATE Gets $C$ sorted in descending order by $\sigma_m$ represented as $C^*$
\FOR{each $D_n, n \in [1, N]$}
    \STATE $proposal=C^*$
\ENDFOR

\FOR{each $C_m, m \in [1, M]$}
    \STATE Calculate $d_m^*$ according to Equ.\ref{equ-31}
    \FOR{each $D_n, n \in [1, N]$}
        \STATE $preference=|d_m^*-x_n^*|$
    \ENDFOR
    \STATE sort $preference$ in ascending order
\ENDFOR
\WHILE{$\exists D_n, n \in [1, N]$ $s.t.$ $G_n=\varnothing$ }
    \FOR{$D_n, n \in [1, N]$ $s.t.$ $G_n=\varnothing$}
        \STATE $c\leftarrow$ $\max(proposal)$ to whom $D_n$ has not proposed yet
        \IF{$\exists D_j, j \in [1, N]$ $s.t.$ $G_j=c$}
            \IF{ $preference[c]$ of $D_n$ is greater $D_j$}
                \STATE $G_n \leftarrow c$, $G_j \leftarrow \varnothing$
            \ENDIF
        \ELSE
            \STATE $G_n \leftarrow c$ 
        \ENDIF
    \ENDFOR
\ENDWHILE
\RETURN $G$
\end{algorithmic}
\end{algorithm}

\subsection{Experiments Setting}\label{App-4}

Our training utilizes a widely used and well-established model architecture: a Convolutional Neural Network (CNN). For the simpler MNIST dataset, we use CNN, which consists of two convolutional layers and two fully connected layers. For datasets CIFAR-10 and CIFAR-100, we use CNN composed of three convolutional layers and four fully connected layers.
The FL training process involves 100 iterations, including three local epochs of client training. Cross-entropy loss is employed for all classification tasks. Meanwhile, we adopt the Adam optimizer with a learning rate of 0.001. All experiments are conducted on PyTorch  1.11.0 and RTX 4090. In the absence of special instructions, the specific parameters of the experiment in this paper are set in Table \ref{tab3}.

\begin{table}[htb]
\centering
\caption{Experiment Setup}
\begin{tabular}{l l}
\toprule
\textbf{Parameters} & \makecell[c]{\textbf{Values}}\\ \midrule
$\lambda$ (Market regulating factor)  & \makecell[c]{1} \\
$\rho$ (Unit training payment of $D$)  & \makecell[c]{1} \\
$f_n$ (Data quality of training by $D_n$)      & \makecell[c]{$U\sim[0,1]$} \\
$\varepsilon$ (Unit training cost of $C$)      & \makecell[c]{1} \\
$\sigma_m$ (Computational power of $C_m$)         & \makecell[c]{$U\sim[0,1]$} \\
$\alpha$ (Adjustment factor of model quality)     & \makecell[c]{5} \\
$f_m$ (Data quality of $C_m$ calculate during training) & \makecell[c]{$U\sim[0,1]$} \\
$\beta$ (Learning rate for local model parameter update)  & \makecell[c]{$1e-5$} \\
\bottomrule
\end{tabular}

\label{tab3}
\end{table}

\subsection{Supplementary Results of (RQ2)}\label{App-5}

In MNIST and CIFAR-10, we randomly selected three data owners from $N, N=10$ for the experiment. For each selected data owner, we fix the values of the other data owners, varying their $f_n$ from $-60\%$ to $+60\%$ of the actual value of $f_n$, and observe the variation of $U_n$. The results in Figs. \subref*{fig7a} and \subref*{fig7b} show that, as the deviation ratio of data owner reported $f_n$ increases, $U_n$ increases on different datasets. However, false data quality increases the personal utility value, affecting other high-quality data owners' utility in making a proper decision, thus affecting the final global utility.

To further observe the effects of initial $f_n$ on $U_s$, we also performed the same experiments on the CIFAR-10 dataset. We assume the deviation ratio of the reported $f_n$ from $-60\%$ to $60\%$ with the total number of data owners on CIFAR-10 from $20\%$ to $100\%$, and observe the effect of $U_s$ and the performance of the global model. As shown in Figure \subref*{fig8a}, $U_s$ increases with the reported deviation rate and the percentage of data owners. Since $f_n$ is relative, it increases as the reported deviation rate increases, increasing $U_s$. In the same way, we can conclude that the wrong report will cause the model owner to get the wrong utility value. Then, we also discuss the specific effects of report deviation of $f_n$ in Fig. \subref*{fig8b}. The model accuracy of the control group down-regulated by 60$\%$ is always higher than the real situation, and the model accuracy of the control group up-regulated by 60$\%$ is always lower than the real situation. The results demonstrate that $f_n$ affects the accuracy of the global model, and the evaluation of $f_n$ is necessary.

\begin{figure}[tb]
  \centering
  \subfloat[]
  {\includegraphics[width=0.50\columnwidth]{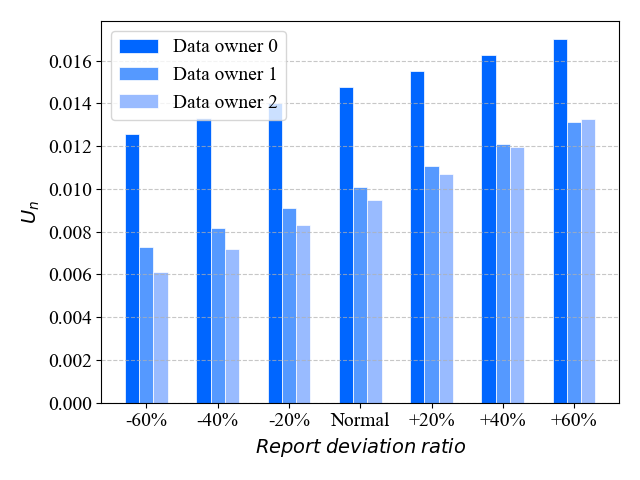}\label{fig7a}}
  \hspace{-2mm}     
  \subfloat[]
  {\includegraphics[width=0.50\columnwidth]{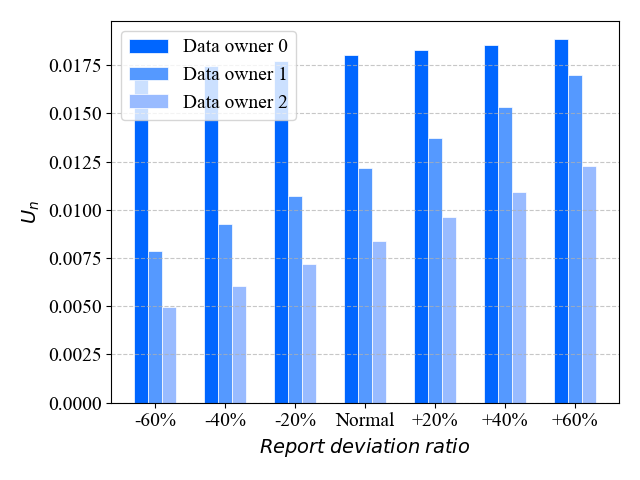}\label{fig7b}}
  \quad
  \caption{Effect of the initial $f_n$ on the utility of the data owner on different datasets, (a) MNIST and (b) CIFAR-10.}
  \label{fig-7}
\end{figure}

\begin{figure}[tb]
  \centering
  \subfloat[]
  {\includegraphics[width=0.51\columnwidth]{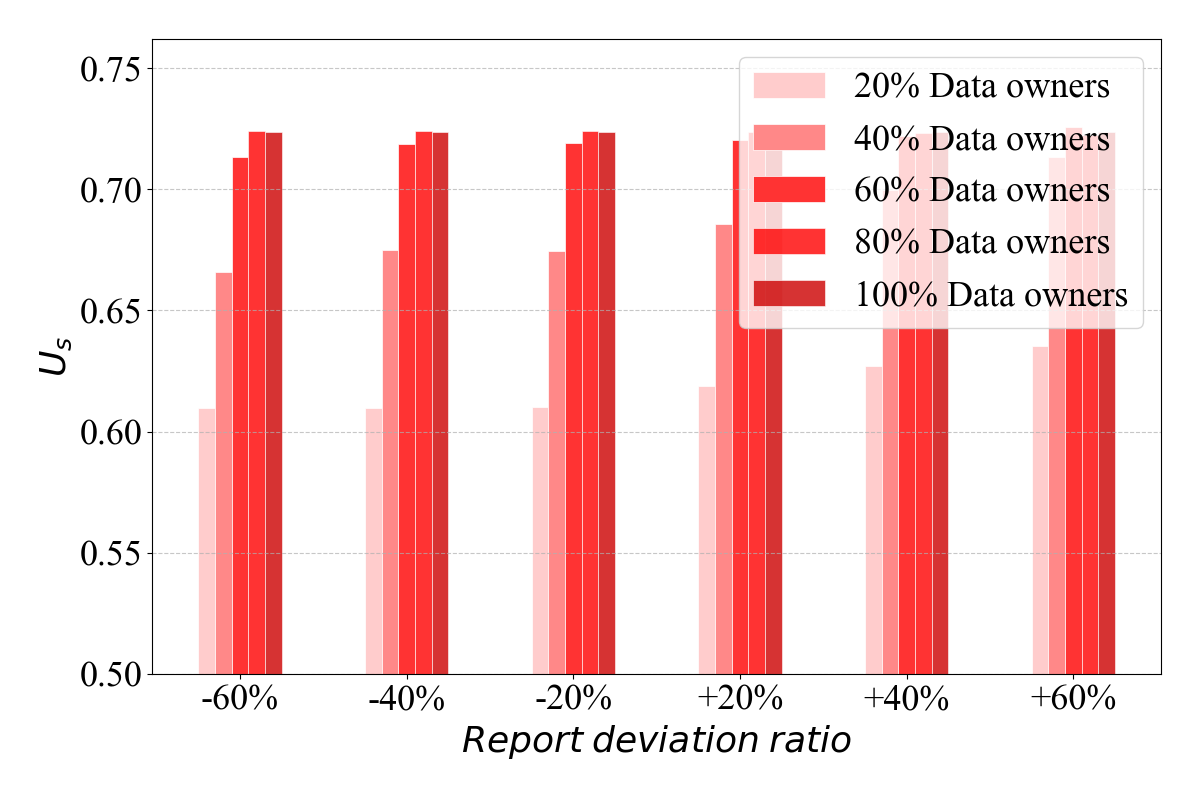}\label{fig8a}}
  \hspace{-2mm}     
  \subfloat[]
  {\includegraphics[width=0.48\columnwidth]{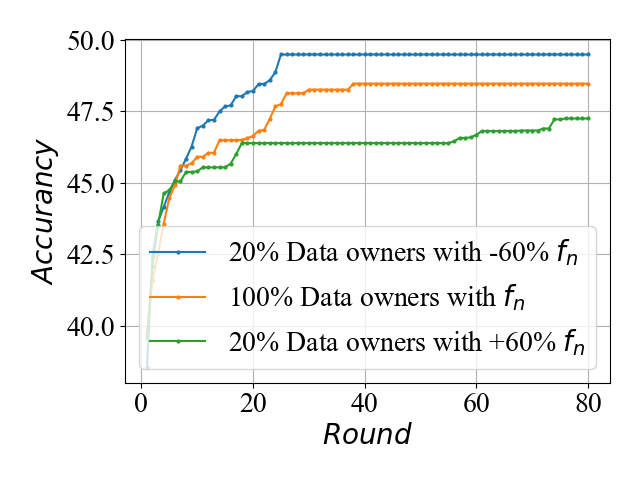}\label{fig8b}}
  \quad
  \caption{Effect of reported $f_n$ to $U_s$ and the accuracy of global model on CIFAR-10, (a) $U_s$ with different proportions of data owners and different report deviation ratio $f_n$, and (b) the accuracy of global model with different $f_n$.}
  \label{fig8}
\end{figure}

\subsection{Experiment on Computing Center}\label{App-6}

To observe the effect of the matching and parameter analysis of computing centers, we set $M=10$ and $N=10$ and observe the matching results on the different datasets. As shown in Tables \ref{tab4}, \ref{tab5} and \ref{tab6}, for each computing center, they first calculate the $d^*_m$ according to Equ.\ref{equ-32}. Then, the value of $preference$ makes the highest priority to the data owner who provides $x^*_n$ closer to $d^*_m$. At the same time, every data owner tends to choose the computing center with high $\sigma_m$. Here, we randomly set $\sigma_m \in [0,1]$ of the computing centers with o.1 step size. Experimental results show that our algorithm can obtain the approximate optimal amount of data based on $d^*_m$ and $x^*_n$. In fact, without considering the privacy issue, we can make the data center undertakes exactly $d^*_m$ of the data to optimal its utility, but this violates the privacy constraint. It should be noted that since the data owner will choose inversely according to the value of $\sigma_m$, the matching result is the optimal solution that satisfies the wishes of both parties. For example, in Table 4, $d_m$ of $C_1$ is 1327, and $x_n$ of $D_10$ is also 1327, but they do not match since $\sigma_1$ is only 0.1.

\begin{table}[htb]
\centering
\caption{Matching results, $U_m$, and related parameters on MNIST}
\setlength{\tabcolsep}{0.7mm} 
\begin{tabular}{l l l l l l l l l l l}
\toprule
\makecell[c]{\textbf{}} & \makecell[c]{\textbf{$D_1$}} & \makecell[c]{\textbf{$D_2$}} & \makecell[c]{\textbf{$D_3$}} & \makecell[c]{\textbf{$D_4$}} & \makecell[c]{\textbf{$D_5$}} & \makecell[c]{\textbf{$D_6$}} & \makecell[c]{\textbf{$D_7$}} & \makecell[c]{\textbf{$D_8$}} & \makecell[c]{\textbf{$D_9$}} & \makecell[c]{\textbf{$D_{10}$}} \\ \midrule
\makecell[c]{\textbf{$C$}} & \makecell[c]{$C_5$} & \makecell[c]{$C_1$} & \makecell[c]{$C_2$} & \makecell[c]{$C_9$} & \makecell[c]{$C_8$} & \makecell[c]{$C_6$} & \makecell[c]{$C_4$} & \makecell[c]{$C_7$} & \makecell[c]{$C_{10}$} & \makecell[c]{$C_3$} \\
\makecell[c]{\textbf{$\sigma_m$}} & \makecell[c]{0.5} & \makecell[c]{0.1} & \makecell[c]{0.2} & \makecell[c]{0.9} & \makecell[c]{0.8} & \makecell[c]{0.6} & \makecell[c]{0.4} & \makecell[c]{0.7} & \makecell[c]{1.0} & \makecell[c]{0.3} \\
\makecell[c]{\textbf{$d_m$}} & \makecell[c]{798} & \makecell[c]{1327} & \makecell[c]{1195} & \makecell[c]{269} & \makecell[c]{401} & \makecell[c]{666} & \makecell[c]{930} & \makecell[c]{533} & \makecell[c]{136} & \makecell[c]{1062} \\
\makecell[c]{\textbf{$x_n$}} & \makecell[c]{584} & \makecell[c]{446} & \makecell[c]{577} & \makecell[c]{157} & \makecell[c]{445} & \makecell[c]{719} & \makecell[c]{1120} & \makecell[c]{556} & \makecell[c]{136} & \makecell[c]{1327} \\
\makecell[c]{\textbf{$U_m$}} & \makecell[c]{0.069} & \makecell[c]{0.127} & \makecell[c]{0.110} & \makecell[c]{0.015} & \makecell[c]{0.008} & \makecell[c]{0.021} & \makecell[c]{0.091} & \makecell[c]{0.040} & \makecell[c]{0.000} & \makecell[c]{0.088} \\
\bottomrule
\end{tabular}
\label{tab4}
\end{table}

\begin{table}[htb]
\centering
\caption{Matching results, $U_m$, and related parameters on CIFAR-10}
\setlength{\tabcolsep}{0.7mm} 
\begin{tabular}{l l l l l l l l l l l}
\toprule
\makecell[c]{\textbf{}} & \makecell[c]{\textbf{$D_1$}} & \makecell[c]{\textbf{$D_2$}} & \makecell[c]{\textbf{$D_3$}} & \makecell[c]{\textbf{$D_4$}} & \makecell[c]{\textbf{$D_5$}} & \makecell[c]{\textbf{$D_6$}} & \makecell[c]{\textbf{$D_7$}} & \makecell[c]{\textbf{$D_8$}} & \makecell[c]{\textbf{$D_9$}} & \makecell[c]{\textbf{$D_{10}$}} \\ \midrule
\makecell[c]{\textbf{$C$}} & \makecell[c]{$C_5$} & \makecell[c]{$C_7$} & \makecell[c]{$C_6$} & \makecell[c]{$C_9$} & \makecell[c]{$C_8$} & \makecell[c]{$C_4$} & \makecell[c]{$C_{10}$} & \makecell[c]{$C_1$} & \makecell[c]{$C_3$} & \makecell[c]{$C_2$} \\
\makecell[c]{\textbf{$\sigma_m$}} & \makecell[c]{0.5} & \makecell[c]{0.7} & \makecell[c]{0.6} & \makecell[c]{0.9} & \makecell[c]{0.8} & \makecell[c]{0.4} & \makecell[c]{1.0} & \makecell[c]{0.1} & \makecell[c]{0.3} & \makecell[c]{0.2} \\
\makecell[c]{\textbf{$d_m$}} & \makecell[c]{1183} & \makecell[c]{756} & \makecell[c]{969} & \makecell[c]{329} & \makecell[c]{542} & \makecell[c]{1396} & \makecell[c]{115} & \makecell[c]{2037} & \makecell[c]{1610} & \makecell[c]{1823} \\
\makecell[c]{\textbf{$x_n$}} & \makecell[c]{1795} & \makecell[c]{820} & \makecell[c]{199} & \makecell[c]{408} & \makecell[c]{715} & \makecell[c]{2036} & \makecell[c]{115} & \makecell[c]{148} & \makecell[c]{183} & \makecell[c]{179} \\
\makecell[c]{\textbf{$U_m$}} & \makecell[c]{0.078} & \makecell[c]{0.040} & \makecell[c]{0.052} & \makecell[c]{0.013} & \makecell[c]{0.023} & \makecell[c]{0.093} & \makecell[c]{0.000} & \makecell[c]{0.043} & \makecell[c]{0.042} & \makecell[c]{0.093} \\
\bottomrule
\end{tabular}
\label{tab5}
\end{table}

\begin{table}[htb]
\centering
\caption{Matching results, $U_m$, and related parameters on CIFAR-100}
\setlength{\tabcolsep}{0.7mm} 
\begin{tabular}{l l l l l l l l l l l}
\toprule
\makecell[c]{\textbf{}} & \makecell[c]{\textbf{$D_1$}} & \makecell[c]{\textbf{$D_2$}} & \makecell[c]{\textbf{$D_3$}} & \makecell[c]{\textbf{$D_4$}} & \makecell[c]{\textbf{$D_5$}} & \makecell[c]{\textbf{$D_6$}} & \makecell[c]{\textbf{$D_7$}} & \makecell[c]{\textbf{$D_8$}} & \makecell[c]{\textbf{$D_9$}} & \makecell[c]{\textbf{$D_{10}$}} \\ \midrule
\makecell[c]{\textbf{$C$}} & \makecell[c]{$C_7$} & \makecell[c]{$C_5$} & \makecell[c]{$C_8$} & \makecell[c]{$C_2$} & \makecell[c]{$C_6$} & \makecell[c]{$C_9$} & \makecell[c]{$C_{10}$} & \makecell[c]{$C_3$} & \makecell[c]{$C_1$} & \makecell[c]{$C_4$} \\
\makecell[c]{\textbf{$\sigma_m$}} & \makecell[c]{0.7} & \makecell[c]{0.5} & \makecell[c]{0.8} & \makecell[c]{0.2} & \makecell[c]{0.6} & \makecell[c]{0.9} & \makecell[c]{1.0} & \makecell[c]{0.3} & \makecell[c]{0.1} & \makecell[c]{0.4} \\
\makecell[c]{\textbf{$d_m$}} & \makecell[c]{726} & \makecell[c]{1152} & \makecell[c]{513} & \makecell[c]{1791} & \makecell[c]{939} & \makecell[c]{300} & \makecell[c]{87} & \makecell[c]{1578} & \makecell[c]{2004} & \makecell[c]{1365} \\
\makecell[c]{\textbf{$x_n$}} & \makecell[c]{714} & \makecell[c]{1807} & \makecell[c]{452} & \makecell[c]{110} & \makecell[c]{440} & \makecell[c]{273} & \makecell[c]{86} & \makecell[c]{148} & \makecell[c]{98} & \makecell[c]{2003} \\
\makecell[c]{\textbf{$U_m$}} & \makecell[c]{0.026} & \makecell[c]{0.081} & \makecell[c]{0.033} & \makecell[c]{0.048} & \makecell[c]{0.032} & \makecell[c]{0.015} & \makecell[c]{0.000} & \makecell[c]{0.112} & \makecell[c]{0.096} & \makecell[c]{0.076} \\
\bottomrule
\end{tabular}
\label{tab6}
\end{table}

\end{document}